\documentclass[journal]{IEEEtran}
\usepackage{float}
\usepackage{amsmath, amsfonts, amssymb, amsthm} % Single inclusion with amsthm for proof
\usepackage{algorithm, algpseudocode}
\usepackage{array}
\usepackage[caption=false,font=normalsize,labelfont=sf,textfont=sf]{subfig}
\usepackage{textcomp}
\usepackage{stfloats}
\usepackage{url}
\usepackage{verbatim}
\usepackage{graphicx}
\usepackage{cite}
\usepackage{hyperref}
\usepackage{booktabs} % Added for \toprule, \midrule, \bottomrule
\usepackage{siunitx}
\newtheorem{theorem}{Theorem}
\newtheorem{lemma}{Lemma}
\newtheorem{assumption}{Assumption}

\begin{document}

\title{Roughness-Informed Federated Learning}

\author{
\IEEEauthorblockN{
Mohammad Partohaghighi\IEEEauthorrefmark{1},
Roummel Marcia\IEEEauthorrefmark{2}, Bruce J. West\IEEEauthorrefmark{4} 
and YangQuan Chen\IEEEauthorrefmark{3}}%
\thanks{\IEEEauthorrefmark{1}Electrical Engineering and Computer Science, University of California at Merced, USA; e-mail: \texttt{mpartohaghighi@ucmerced.edu}}
\thanks{\IEEEauthorrefmark{2}Department of Applied Mathematics, University of California Merced, Merced, CA, USA; e-mail: \texttt{rmarcia@ucmerced.edu}}
\thanks{\IEEEauthorrefmark{4}Dept. of Innovation and Research, North Carolina  State University,  Rayleigh, NC, USA; (e-mail:{\tt brucejwest213@gmail.com})}
\thanks{\IEEEauthorrefmark{3}Mechatronics, Embedded Systems and Automation (MESA) Lab, Department of Mechanical Engineering, School of Engineering, University of California, Merced, CA 95343, USA; \texttt{ychen53@ucmerced.edu} (corresponding author).}
\thanks{Manuscript received XXXXX XX, 20XX; revised XXXXX XX, 20XX; accepted XXXXX XX, 20XX. Date of publication XXXXX XX, 20XX; date of current version XXXXX XX, 20XX. (Corresponding author: YangQuan Chen.)}
\thanks{This work was supported in part by   the University of California Merced Climate Action Seed Fund 2023-2026 for CMERI - The Center for Methane Emission Research and Innovation ({\tt http://methane.ucmerced.edu/}) and a funding from DOE Methane project (2025-2029). This paper is part of the research excellence focused topic ``Fractional Calculus for Federated Learning (FC4FL)"  at MESA Lab of UC Merced  ({\tt http://mechatronics.ucmerced.edu/fc4fl/}) }}
\markboth{Journal of IEEE Transactions on PAMI, Vol. XX, No. XX, Month 20XX}
{Partohaghighi \MakeLowercase{\textit{et al.}}: Tail-Index-Aware Federated Learning}

\maketitle

\begin{abstract}
Federated Learning (FL) enables collaborative model training across distributed clients while preserving data privacy, yet faces challenges in non-independent and identically distributed (non-IID) settings due to client drift, which impairs convergence. We propose RI-FedAvg, a novel FL algorithm that mitigates client drift by incorporating a Roughness Index (RI)-based regularization term into the local objective, adaptively penalizing updates based on the fluctuations of local loss landscapes. This paper introduces RI-FedAvg, leveraging the RI to quantify the roughness of high-dimensional loss functions, ensuring robust optimization in heterogeneous settings. We provide a rigorous convergence analysis for non-convex objectives, establishing that RI-FedAvg converges to a stationary point under standard assumptions. Extensive experiments on MNIST, CIFAR-10, and CIFAR-100 demonstrate that RI-FedAvg outperforms state-of-the-art baselines, including FedAvg, FedProx, FedDyn, SCAFFOLD, and DP-FedAvg, achieving  higher accuracy and faster convergence in non-IID scenarios. Our results highlight RI-FedAvg's potential to enhance the robustness and efficiency of federated learning in practical, heterogeneous environments.
\end{abstract}

\begin{IEEEkeywords}
Federated Learning, Roughness Index, Regularization, Non-IID Data.
\end{IEEEkeywords}

\section{Introduction}
\label{sec:introduction}

\IEEEPARstart{F}{ederated} Learning (FL) has revolutionized machine learning by enabling collaborative model training across distributed clients without compromising data privacy, with applications spanning healthcare, finance, and edge computing \cite{mcmahan2017communication, kairouz2021advances, yang2019federated}. However, non-independent and identically distributed (non-IID) data distributions across clients introduce client drift, where local updates diverge from the global objective, significantly impairing convergence and model performance \cite{li2020federated, zhao2018federated}. This challenge is exacerbated by the complex, non-convex loss landscapes of deep neural networks, which vary across clients and complicate optimization \cite{li2018visualizing, wu2023roughness}. Motivated by the need to address client drift in heterogeneous settings while leveraging loss landscape properties, we propose RI-FedAvg, a novel FL algorithm that incorporates a Roughness Index (RI)-based regularization term to adaptively stabilize local updates. By quantifying the fluctuations of local loss functions, RI-FedAvg enhances robustness and convergence in non-IID environments. This introduction outlines the motivation, challenges, technical contributions, and organization of the paper, emphasizing the critical need for and advancements offered by RI-FedAvg.

\subsection{Motivation and Context}
The exponential growth of edge devices, such as smartphones, wearables, and IoT sensors, has generated vast decentralized datasets, necessitating privacy-preserving learning paradigms \cite{yang2019federated, bonawitz2019towards}. FL addresses this by allowing clients to train local models and share only model updates, not raw data, with a central server \cite{mcmahan2017communication, konecny2016federated}. This paradigm has enabled transformative applications, including personalized recommendation systems \cite{hard2018federated}, medical image analysis \cite{sheller2020federated}, and natural language processing \cite{chen2019federated}. However, real-world FL deployments often encounter non-IID data due to diverse user behaviors, device-specific data, or institutional silos \cite{zhao2018federated, hsieh2020non}. For example, in healthcare, patient data varies across hospitals due to demographic differences \cite{rieke2020future}, while in mobile computing, user preferences create heterogeneous data distributions \cite{hard2018federated}. Such heterogeneity causes client drift, degrading global model accuracy \cite{li2020federated, zhao2018federated}. Moreover, the high-dimensional, non-convex loss landscapes of modern neural networks exhibit varying roughness across clients, further complicating optimization \cite{li2018visualizing, keskar2017large}. The urgent need to develop FL algorithms that mitigate client drift while accounting for loss landscape complexity motivates RI-FedAvg, which leverages the Roughness Index to guide optimization in heterogeneous settings \cite{wu2023roughness}.

\subsection{Challenges in Non-IID Federated Learning}
Non-IID data distributions pose significant challenges in FL, as local objectives diverge from the global objective, leading to client drift \cite{karimireddy2020scaffold, li2020federated}. In standard FL algorithms like FedAvg \cite{mcmahan2017communication}, clients perform multiple local epochs of stochastic gradient descent (SGD) before aggregation, causing local models to overfit to client-specific data \cite{zhao2018federated}. This results in suboptimal global models, with empirical studies showing accuracy drops of up to 20\% in non-IID settings \cite{hsieh2020non}. Existing solutions, such as FedProx \cite{li2020federated} and SCAFFOLD \cite{karimireddy2020scaffold}, mitigate drift through uniform proximal regularization or control variates, respectively, but face limitations. FedProx’s fixed regularization strength may under- or over-constrain clients with varying heterogeneity \cite{li2020federated}, while SCAFFOLD incurs additional communication and memory overhead \cite{karimireddy2020scaffold}. Differential privacy methods like DP-FedAvg \cite{mcmahan2018learning} further complicate performance in non-IID settings due to noise injection \cite{bagdasaryan2020differentially}. Additionally, the roughness of local loss landscapes, characterized by oscillatory behavior in high-dimensional spaces, varies across clients and impacts convergence \cite{wu2023roughness, goodfellow2016deep}. Developing an algorithm that adaptively regularizes updates based on loss landscape properties remains a critical challenge, which RI-FedAvg addresses by integrating RI-based regularization.

\subsection{Our Contributions}
We propose RI-FedAvg, a federated learning algorithm that mitigates client drift by incorporating a Roughness Index (RI)-based regularization term into the local objective, offering a significant advancement in non-IID FL. The RI, derived from recent loss landscape analysis \cite{wu2023roughness}, quantifies the oscillatory behavior of local loss functions by projecting them onto random directions and computing normalized total variation. By scaling regularization strength with RI, RI-FedAvg penalizes updates from clients with rougher landscapes, ensuring alignment with the global objective.

Our technical contributions are as follows: We introduce a novel algorithm design with RI-FedAvg, which incorporates an adaptive regularization framework that leverages RI to stabilize local updates, addressing client drift without additional communication overhead, in contrast to SCAFFOLD \cite{karimireddy2020scaffold}. Additionally, we provide theoretical guarantees through a rigorous convergence analysis for non-convex objectives, proving that RI-FedAvg converges to a stationary point under standard assumptions, extending prior analyses \cite{li2020federated, wang2020tackling}. Furthermore, our empirical validation through extensive experiments on MNIST \cite{lecun1998gradient}, CIFAR-10, and CIFAR-100 \cite{krizhevsky2009learning} demonstrates that RI-FedAvg outperforms FedAvg, FedProx, and DP-FedAvg in accuracy and convergence speed in non-IID settings. These contributions establish RI-FedAvg as a theoretically grounded and practically effective solution for heterogeneous FL, with broad implications for privacy-preserving machine learning.

\subsection{Paper Organization}
The paper is structured as follows. Section~\ref{sec:related_work} reviews related work in federated learning and loss landscape analysis. Section~\ref{sec:methodology} describes the RI-FedAvg algorithm, including RI computation and regularization. Section~\ref{sec:convergence_analysis} presents the convergence analysis. Section~\ref{sec:experiments} reports experimental results, and Section~\ref{conclusion} presents the conclusion. 
%%%%%%%%%%%%%%%%%%%%%%%%%%%%%%%%%%%%%%%%%%%%%%%%%%%%%%%%%%%%%%%%%%%%%%%%%%%%%%%%%%%%%%%%%%%%%%%%%%%%%%%%%%%%%%%%%%%%%%%%%%%%%%%%%%%%%%%%%%%%%%%%%%%%%%%%%%%%%%%%%%%%%%%%%%%%%%%%%%%%

\section{Related Work}
\label{sec:related_work}

Federated Learning has garnered significant attention for its ability to train models on decentralized data while preserving privacy \cite{kairouz2021advances}. Below, we review key developments in FL algorithms, focusing on non-IID settings, and discuss advances in loss landscape analysis relevant to RI-FedAvg.

\subsection{Federated Learning Algorithms}
The seminal FedAvg algorithm \cite{mcmahan2017communication} aggregates local updates weighted by dataset sizes, achieving communication-efficient training. However, its performance degrades in non-IID settings due to client drift \cite{zhao2018federated}. FedProx \cite{li2020federated} addresses this by adding a proximal term to the local objective, regularizing updates to remain close to the global model. While effective, FedProx applies uniform regularization, which may be suboptimal for clients with varying degrees of heterogeneity. SCAFFOLD \cite{karimireddy2020scaffold} uses control variates to correct local gradients, reducing drift but requiring additional memory and communication. DP-FedAvg \cite{mcmahan2018learning} incorporates differential privacy, trading off accuracy for privacy guarantees, often exacerbating performance in non-IID scenarios \cite{bagdasaryan2020differentially}. Unlike these approaches, RI-FedAvg leverages the Roughness Index to adaptively regularize updates based on local loss landscape properties, offering a novel solution.

\subsection{Loss Landscape Analysis}
The optimization challenges in FL are closely tied to the geometry of loss landscapes, particularly in non-convex settings \cite{li2018visualizing}. Recent studies have explored metrics to characterize loss landscapes, such as sharpness and flatness, to predict generalization and convergence \cite{keskar2017large}. The Roughness Index (RI), proposed by Wu et al. \cite{wu2023roughness}, quantifies the oscillatory behavior of loss functions by projecting them onto random directions and computing normalized total variation. This metric has been used to analyze neural network training dynamics but has not been applied to FL until now. RI-FedAvg bridges this gap by integrating RI into the FL framework, using it to guide regularization and improve optimization in heterogeneous settings. Our work is the first to combine loss landscape analysis with federated optimization, opening new avenues for adaptive FL algorithms.

\subsection{Positioning RI-FedAvg}
RI-FedAvg distinguishes itself from prior work by its adaptive regularization strategy, which leverages loss landscape roughness to address client drift. Unlike FedProx’s uniform proximal term or SCAFFOLD’s control variates, RI-FedAvg tailors regularization to each client’s loss landscape, enhancing robustness without additional communication overhead. Compared to DP-FedAvg, RI-FedAvg prioritizes accuracy in non-IID settings while remaining compatible with privacy mechanisms. Our convergence analysis and empirical evaluations on MNIST, CIFAR-10, and CIFAR-100 further validate RI-FedAvg’s superiority over state-of-the-art baselines, positioning it as a significant advancement in federated learning.

%%%%%%%%%%%%%%%%%%%%%%%%%%%%%%%%%%%%%%%%%%%%%%%%%%%%%%%%%%%%%%%%%%%%%%%%%%%%%%%%%%%%%%%%%%%%%%%%%%%%%%%%%%%%%%%%%%%%%%%%%%%%%%%%%%%%%%%%%%%%%%%%%%%%%
% Section: Methodology
\section{Methodology}
\label{sec:methodology}

This section provides an overview of the RI-FedAvg algorithm, designed to mitigate client drift in non-independent and identically distributed (non-IID) federated learning (FL) by incorporating adaptive regularization based on the Roughness Index ($\mathcal{I}_k$). We first outline the preliminaries of FL and $\mathcal{I}_k$, followed by a comprehensive formulation of RI-FedAvg, including $\mathcal{I}_k$ computation, regularized loss design, and algorithmic implementation. 

% Outlining federated learning basics
\subsection{Preliminaries}
\label{subsec:preliminaries}

Federated Learning (FL) facilitates collaborative model training across $K$ clients without centralizing their data, addressing privacy concerns in distributed systems \cite{mcmahan2017communication}. The global objective is to minimize the weighted average of local losses:
\begin{equation}
\min_{\mathbf{w} \in \mathbb{R}^d} f(\mathbf{w}), \quad f(\mathbf{w}) = \sum_{k=1}^{K} \frac{n_k}{n} F_k(\mathbf{w}),
\label{eq:global_objective}
\end{equation}
where $\mathbf{w} \in \mathbb{R}^d$ represents the model parameters in $d$-dimensional space, $\mathcal{P}_k$ denotes the local dataset of client $k$ with size $n_k = |\mathcal{P}_k|$, and $n = \sum_{k=1}^{K} n_k$ is the total number of data points across all clients. The local loss function is:
\begin{equation}
F_k(\mathbf{w}) = \frac{1}{n_k} \sum_{i \in \mathcal{P}_k} \ell(x_i, y_i; \mathbf{w}),
\end{equation}
where $\ell(x_i, y_i; \mathbf{w})$ is the loss (e.g., cross-entropy) for data point $(x_i, y_i)$. The weighting factor $\frac{n_k}{n}$ ensures contributions proportional to dataset sizes.

In FedAvg \cite{mcmahan2017communication}, the global model is updated iteratively by aggregating local updates. At each round $t$, a subset $S_t \subseteq \{1, \ldots, K\}$ of $|S_t| = \max(C \cdot K, 1)$ clients is sampled, where $C \in (0, 1]$ is the participation fraction. Each client $k \in S_t$ receives the global model $\mathbf{w}_t$, performs $E$ local epochs of stochastic gradient descent (SGD) on $F_k(\mathbf{w})$, and produces updated parameters $\mathbf{w}_{t+1}^k$. For client $k \in S_t$, starting from $\mathbf{w}_t^k = \mathbf{w}_t$, the local update for a batch $b \subset \mathcal{P}_k$ of size $B$ is:
\begin{equation}
\mathbf{w}_t^k \leftarrow \mathbf{w}_t^k - \eta \nabla \ell(\mathbf{w}_t^k; b),
\end{equation}
where $\eta$ is the learning rate. The server aggregates updates as:
\begin{equation}
\mathbf{w}_{t+1} \leftarrow \sum_{k \in S_t} \frac{n_k}{n_t} \mathbf{w}_{t+1}^k, \quad n_t = \sum_{k \in S_t} n_k.
\label{eq:aggregation}
\end{equation}
In non-IID settings, divergent local objectives $F_k(\mathbf{w})$ lead to \textit{client drift}, which impairs convergence.

% Defining the Roughness Index
The Roughness Index ($\mathcal{I}_k$), proposed by Wu et al. \cite{wu2023roughness}, quantifies the variation of high-dimensional loss landscapes, capturing optimization difficulty. For a loss function $F_k(\mathbf{w})$, $\mathcal{I}_k$ is computed by projecting the loss onto one-dimensional subspaces along random directions. Let $d_i \sim \mathcal{N}(0, I_d)$ be a random direction, normalized as:
\begin{equation}
d_i \leftarrow \frac{d_i}{\lVert d_i \rVert_2},
\end{equation}
where $\lVert d_i \rVert_2 = \sqrt{\sum_{j=1}^d (d_i)_j^2}$ is the Euclidean norm. The one-dimensional projection is:
\begin{equation}
f_{d_i}(s) = F_k(\mathbf{w}_t + s d_i), \quad s \in [-l, l],
\label{eq:projection}
\end{equation}
where $l > 0$ (e.g., $l = 0.01$) defines the local neighborhood.

The total variation of $f_{d_i}(s)$ measures its variation over $[-l, l]$. It is approximated discretely over $m+1$ points $s_j = -l + j \frac{2l}{m}$, $j = 0, \ldots, m$:
\begin{equation}
\text{TV}(f_{d_i}) \approx \sum_{j=0}^{m-1} |f_{d_i}(s_{j+1}) - f_{d_i}(s_j)|.
\label{eq:tv_discrete}
\end{equation}
Normalization accounts for varying loss scales across clients, yielding the normalized total variation:
\begin{equation}
T(f_{d_i}) = \frac{1}{2l} \frac{1}{A} \text{TV}(f_{d_i}),
\label{eq:normalized_tv}
\end{equation}
where $A = \max_{s \in [-l, l]} f_{d_i}(s) - \min_{s \in [-l, l]} f_{d_i}(s)$. For $M$ directions $d_i$, the $\mathcal{I}_k$ for client $k$ is:
\begin{equation}
\mathcal{I}_k = \frac{\text{std}_{d_i} T(f_{d_i})}{\mathbf{E}_{d_i} T(f_{d_i})},
\label{eq:roughness_index}
\end{equation}
where $\mathbf{E}_{d_i} T(f_{d_i}) = \frac{1}{M} \sum_{i=1}^M T(f_{d_i})$, and $\text{std}_{d_i} T(f_{d_i}) = \sqrt{\frac{1}{M} \sum_{i=1}^M \left( T(f_{d_i}) - \mathbf{E}_{d_i} T(f_{d_i}) \right)^2}$. A high $\mathcal{I}_k$ indicates a complex, varying landscape, complicating optimization.
%%%%%%%%%%%%%%%%%%%%%%%%%%%%%%%%%%%%%%%%%%%%%%%%%%%%%%%%%%%%

%%%%%%%%%%%%%%%%%%%%%%%%
% Describing the RI-FedAvg algorithm
\subsection{RI-FedAvg Algorithm}
\label{subsec:ri_fedavg}

RI-FedAvg mitigates client drift by incorporating an $\mathcal{I}_k$-based regularization term into the local loss function, unlike FedProx’s fixed regularization \cite{li2020federated}:
\begin{equation}
\tilde{F}_k(\mathbf{w}) = F_k(\mathbf{w}) + \lambda \mathcal{I}_k \lVert \mathbf{w} - \mathbf{w}_t \rVert_2^2,
\label{eq:regularized_loss}
\end{equation}
where $\lambda > 0$ (e.g., $\lambda = 0.1$) controls regularization strength, and $\lVert \mathbf{w} - \mathbf{w}_t \rVert_2^2 = \sum_{i=1}^d (w_i - (\mathbf{w}_t)_i)^2$. This term penalizes deviations from $\mathbf{w}_t$, scaled by $\mathcal{I}_k$, ensuring stronger regularization for clients with more varying landscapes.

The gradient of the regularized loss is:
\begin{equation}
\nabla \tilde{F}_k(\mathbf{w}) = \nabla F_k(\mathbf{w}) + 2 \lambda \mathcal{I}_k (\mathbf{w} - \mathbf{w}_t).
\label{eq:ri_fedavg_gradient}
\end{equation}
This term stabilizes local updates by pulling them toward $\mathbf{w}_t$. Each client $k \in S_t$ computes $\mathcal{I}_k$, performs $E$ local epochs of SGD using $\nabla \tilde{F}_k(\mathbf{w})$, and sends $\mathbf{w}_{t+1}^k$ to the server for aggregation. The computational cost of $\mathcal{I}_k$ is $O(M \cdot m \cdot d)$, where $d$ is the model dimension, requiring $M$ loss evaluations per client per round (e.g., $M = 10$), making RI-FedAvg scalable for typical FL settings.

% Presenting the algorithm pseudocode
Algorithm~\ref{alg:ri_fedavg} outlines RI-FedAvg. Hyperparameters (e.g., $\lambda = 0.1$, $M = 10$, $l = 0.01$, $m = 100$) are chosen based on empirical performance, detailed in Section~\ref{sec:experiments}.

\begin{algorithm}
\caption{RI-FedAvg Algorithm}
\label{alg:ri_fedavg}
\begin{algorithmic}[1]
    \small
    \State \textbf{Input:} $K$ clients, global model $\mathbf{w}_0$, rounds $T$, fraction $C$, epochs $E$, learning rate $\eta$, $\lambda$, RI parameters ($M$, $l$, $m$)
    \State \textbf{Output:} Trained model $\mathbf{w}_T$
    \For{$t = 0$ to $T-1$}
        \State Sample $S_t \subseteq \{1, \ldots, K\}$ with $|S_t| = \max(C \cdot K, 1)$
        \For{each client $k \in S_t$ \textbf{in parallel}}
            \State $\mathbf{w}_{t,k} \leftarrow \mathbf{w}_t$ \Comment{Initialize local model}
            \State Compute $\mathcal{I}_k$ using $M$, $l$, $m$ (Eqs.~\eqref{eq:projection}--\eqref{eq:roughness_index})
            \For{$e = 1$ to $E$}
                \For{each batch $b \subset \mathcal{P}_k$}
                    \State $\mathbf{w}_{t,k} \leftarrow \mathbf{w}_{t,k} - \eta \left( \nabla \ell(\mathbf{w}_{t,k}; b) + 2 \lambda \mathcal{I}_k (\mathbf{w}_{t,k} - \mathbf{w}_t) \right)$ \Comment{Regularized SGD}
                \EndFor
            \EndFor
            \State Send $\mathbf{w}_{t+1,k} \gets \mathbf{w}_{t,k}$ to server
        \EndFor
        \State $n_t \leftarrow \sum_{k \in S_t} n_k$
        \State $\mathbf{w}_{t+1} \leftarrow \sum_{k \in S_t} \frac{n_k}{n_t} \mathbf{w}_{t+1,k}$
    \EndFor
    \State \Return $\mathbf{w}_T$
\end{algorithmic}
\end{algorithm}
%%%%%%%%%%%%%%%%%%%%%%%%%%%%%%%%%%%%%%%%%%%%%%%%%%%%%%%%%%%%%%%%%%%%%%%%%%%%%%%%%%%%%%%%%%%%%%%%%%%%%%%%%%%%%%%
% Section: Convergence Analysis
\section{Convergence Analysis}
\label{sec:convergence_analysis}

This section presents a convergence analysis of the RI-FedAvg algorithm for non-convex objectives in non-independent and identically distributed (non-IID) federated learning (FL) settings. We establish that RI-FedAvg converges to a stationary point, demonstrating the efficacy of Roughness Index ($\mathcal{I}_k$)-based regularization in mitigating client drift. The analysis includes precise definitions, assumptions, supporting lemmas with detailed proofs, and a main theorem with a comprehensive proof, tailored to the non-convex and non-IID challenges.

\subsection{Definitions}
\label{subsec:definitions}

Consider a federated learning system with $K$ clients, each holding a local dataset $\mathcal{P}_k$ of size $n_k$, where $n = \sum_{k=1}^K n_k$. The global objective is to minimize:
\begin{equation}
f(\mathbf{w}) = \sum_{k=1}^K \frac{n_k}{n} F_k(\mathbf{w}),
\label{eq:global_objective2}
\end{equation}
where $\mathbf{w} \in \mathbb{R}^d$ are the model parameters, and $F_k(\mathbf{w}) = \frac{1}{n_k} \sum_{i \in \mathcal{P}_k} \ell(x_i, y_i; \mathbf{w})$ is the local loss function, with $\ell(x_i, y_i; \mathbf{w})$ denoting the loss for data point $(x_i, y_i)$.

In RI-FedAvg, each client $k$ at round $t$ optimizes a regularized local loss:
\begin{equation}
\tilde{F}_k(\mathbf{w}) = F_k(\mathbf{w}) + \lambda \mathcal{I}_k \lVert \mathbf{w} - \mathbf{w}_t \rVert_2^2,
\label{eq:regularized_loss2}
\end{equation}
where $\mathbf{w}_t$ is the global model at round $t$, $\lambda > 0$ is the regularization strength, and $\mathcal{I}_k \geq 0$ is the Roughness Index, quantifying the variation of $F_k(\mathbf{w})$. The $\mathcal{I}_k$ is computed as:
\begin{equation}
\mathcal{I}_k = \frac{\text{std}_{d_i} T(f_{d_i})}{\mathbf{E}_{d_i} T(f_{d_i})},
\end{equation}
where $f_{d_i}(s) = F_k(\mathbf{w}_t + s d_i)$, $d_i \sim \mathcal{N}(0, I_d)$ is a normalized random direction, and $T(f_{d_i}) = \frac{1}{2l} \frac{1}{A} \sum_{j=0}^{m-1} |f_{d_i}(s_{j+1}) - f_{d_i}(s_j)|$, with $A = \max_{s \in [-l, l]} f_{d_i}(s) - \min_{s \in [-l, l]} f_{d_i}(s)$, and $s_j = -l + j \frac{2l}{m}$, $j = 0, \ldots, m$. At round $t$, a subset $S_t \subseteq \{1, \ldots, K\}$ of $|S_t| = \max(C \cdot K, 1)$ clients is sampled, where $C \in (0, 1]$. Each client $k \in S_t$ performs $E$ local epochs of stochastic gradient descent (SGD) on $\tilde{F}_k(\mathbf{w})$, producing $\mathbf{w}_{t+1}^k$. The server aggregates:
\begin{equation}
\mathbf{w}_{t+1} = \sum_{k \in S_t} \frac{n_k}{n_t} \mathbf{w}_{t+1}^k, \quad n_t = \sum_{k \in S_t} n_k.
\label{eq:aggregation2}
\end{equation}

\subsection{Assumptions}
\label{subsec:assumptions}

To facilitate the analysis, we make the following assumptions, standard in non-convex FL literature \cite{li2020federated, karimireddy2020scaffold}:

\begin{assumption}[L-Smoothness]
\label{ass:smoothness}
Each local loss $F_k(\mathbf{w})$ is $L$-smooth, i.e., for all $\mathbf{w}, \mathbf{w}' \in \mathbb{R}^d$:
\begin{equation}
\begin{split}
\lVert \nabla F_k(\mathbf{w}) - \nabla F_k(\mathbf{w}') \rVert_2 \leq L \lVert \mathbf{w} - \mathbf{w}' \rVert_2.
\end{split}
\end{equation}
Thus, the global loss $f(\mathbf{w})$ is also $L$-smooth.
\end{assumption}

\begin{assumption}[Bounded Variance]
\label{ass:variance}
The stochastic gradients are unbiased with bounded variance. For a batch $b \subset \mathcal{P}_k$, let $g(\mathbf{w}; b) = \nabla \ell(\mathbf{w}; b)$. Then:
\begin{equation}
\begin{split}
\mathbf{E}_b [g(\mathbf{w}; b)] &= \nabla F_k(\mathbf{w}), \\
\mathbf{E}_b \lVert g(\mathbf{w}; b) - \nabla F_k(\mathbf{w}) \rVert_2^2 &\leq \sigma^2.
\end{split}
\end{equation}
\end{assumption}

\begin{assumption}[Bounded Non-IID Dissimilarity]
\label{ass:non_iid}
The local gradients are bounded in dissimilarity from the global gradient:
\begin{equation}
\begin{split}
\mathbf{E}_k \lVert \nabla F_k(\mathbf{w}) - \nabla f(\mathbf{w}) \rVert_2^2 \leq \zeta^2,
\end{split}
\end{equation}
where $\zeta^2$ quantifies non-IID heterogeneity.
\end{assumption}

\begin{assumption}[Bounded RI]
\label{ass:bounded_ri}
The Roughness Index $\mathcal{I}_k$ is bounded: $0 \leq \mathcal{I}_k \leq \mathcal{I}_{\max}$, where $\mathcal{I}_{\max}$ is a positive constant (e.g., 10, enforced by clipping in practice).
\end{assumption}

\begin{assumption}[Bounded Loss]
\label{ass:bounded_loss}
The global loss is bounded below:
\begin{equation}
\begin{split}
f(\mathbf{w}) \geq f^* > -\infty.
\end{split}
\end{equation}
\end{assumption}

\begin{assumption}[Bounded Gradients]
\label{ass:bounded_gradients}
The local gradients are bounded: for all $\mathbf{w}$,
\begin{equation}
\begin{split}
\lVert \nabla F_k(\mathbf{w}) \rVert_2^2 \leq G^2,
\end{split}
\end{equation}
where $G^2$ is a positive constant.
\end{assumption}

\subsection{Supporting Lemmas}
\label{subsec:lemmas}

We establish three lemmas to bound key quantities in the convergence analysis.

\begin{lemma}[One-Step Local Update]
\label{lemma:local_update}
For client $k \in S_t$, performing one SGD step on $\tilde{F}_k(\mathbf{w})$ with learning rate $\eta \leq \frac{1}{L + 2 \lambda \mathcal{I}_k}$, starting from $\mathbf{w} = \mathbf{w}_{t,k}$, yields:
\begin{equation}
\begin{split}
\mathbf{E} \tilde{F}_k(\mathbf{w}') \leq \tilde{F}_k(\mathbf{w}_{t,k}) - \frac{\eta}{2} \lVert \nabla \tilde{F}_k(\mathbf{w}_{t,k}) \rVert_2^2 + \frac{\eta^2 (L + 2 \lambda \mathcal{I}_k) \sigma^2}{2},
\end{split}
\end{equation}
where $\mathbf{w}' = \mathbf{w}_{t,k} - \eta g(\mathbf{w}_{t,k}; b)$, $g(\mathbf{w}_{t,k}; b) = \nabla \ell(\mathbf{w}_{t,k}; b) + 2 \lambda \mathcal{I}_k (\mathbf{w}_{t,k} - \mathbf{w}_t)$, and $L + 2 \lambda \mathcal{I}_k$ is the smoothness constant of $\tilde{F}_k$.
\end{lemma}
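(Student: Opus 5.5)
The plan is the standard descent-lemma argument for SGD on a smooth objective. Write $\tilde L_k := L + 2\lambda\mathcal{I}_k$ and $\mathbf{v} := \mathbf{w}_{t,k}$.

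\textbf{Step 1: smoothness of $\tilde F_k$.} The regularizer $\lambda\mathcal{I}_k\lVert \mathbf{w}-\mathbf{w}_t\rVert_2^2$ has gradient $2\lambda\mathcal{I}_k(\mathbf{w}-\mathbf{w}_t)$. This gradient is $2\lambda\mathcal{I}_k$-Lipschitz, and $\mathcal{I}_k\ge 0$ by Assumption~\ref{ass:bounded_ri}. Adding this to the $L$-smoothness of $F_k$ from Assumption~\ref{ass:smoothness} shows that $\tilde F_k$ is $\tilde L_k$-smooth. This justifies the constant named in the statement.

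\textbf{Step 2: descent inequality.} Smoothness gives
\[
\tilde F_k(\mathbf{w}') \le \tilde F_k(\mathbf{v}) - \eta\langle \nabla\tilde F_k(\mathbf{v}), g\rangle + \tfrac{\eta^2\tilde L_k}{2}\lVert g\rVert_2^2,
\]
where $g = g(\mathbf{v};b)$. Take the expectation over the batch $b$, conditional on $\mathbf{v}$. The regularization part of $g$ is deterministic given $\mathbf{v}$, so Assumption~\ref{ass:variance} gives $\mathbf{E}_b g = \nabla\tilde F_k(\mathbf{v})$. It also gives
\[
\mathbf{E}_b\lVert g\rVert_2^2 = \lVert\nabla\tilde F_k(\mathbf{v})\rVert_2^2 + \mathbf{E}_b\lVert g-\nabla\tilde F_k(\mathbf{v})\rVert_2^2 \le \lVert\nabla\tilde F_k(\mathbf{v})\rVert_2^2 + \sigma^2,
\]
because $g-\nabla\tilde F_k(\mathbf{v}) = \nabla\ell(\mathbf{v};b)-\nabla F_k(\mathbf{v})$.

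\textbf{Step 3: collect terms.} Substituting these gives
\[
\mathbf{E}\tilde F_k(\mathbf{w}') \le \tilde F_k(\mathbf{v}) - \eta\bigl(1-\tfrac{\eta\tilde L_k}{2}\bigr)\lVert\nabla\tilde F_k(\mathbf{v})\rVert_2^2 + \tfrac{\eta^2\tilde L_k\sigma^2}{2}.
\]
The step-size condition $\eta\le 1/\tilde L_k$ gives $1-\eta\tilde L_k/2\ge 1/2$. This yields the claimed coefficient $-\eta/2$, and the noise term already matches the statement.

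\textbf{Main obstacle.} The only delicate point is that the expectation is conditional on $\mathbf{w}_{t,k}$, together with $\mathbf{w}_t$ and $\mathcal{I}_k$. The reason is that $\mathcal{I}_k$ is computed at $\mathbf{w}_t$ before the local steps begin. Hence $\tilde F_k$ is a fixed function during the round, and the regularization part of the stochastic gradient carries no noise. Once this is stated explicitly, the remaining steps are routine.
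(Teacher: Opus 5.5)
Your proposal is correct and follows essentially the same route as the paper. Both establish $(L+2\lambda\mathcal{I}_k)$-smoothness of $\tilde{F}_k$, apply the descent inequality, use unbiasedness together with $\mathbf{E}_b\lVert g\rVert_2^2 \le \lVert\nabla\tilde{F}_k(\mathbf{w}_{t,k})\rVert_2^2+\sigma^2$, and absorb the $\frac{\eta^2 L_k}{2}\lVert\nabla\tilde{F}_k(\mathbf{w}_{t,k})\rVert_2^2$ term via $\eta L_k\le 1$; your explicit remark that the expectation is conditional on $\mathbf{w}_{t,k}$, $\mathbf{w}_t$ and $\mathcal{I}_k$ is left implicit in the paper, and it is the right justification for treating $\tilde{F}_k$ as fixed and the regularization gradient as noise-free.
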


\begin{proof}
Since $\tilde{F}_k(\mathbf{w}) = F_k(\mathbf{w}) + \lambda \mathcal{I}_k \lVert \mathbf{w} - \mathbf{w}_t \rVert_2^2$, and $F_k$ is $L$-smooth, the quadratic term is $2 \lambda \mathcal{I}_k$-smooth. Thus, $\tilde{F}_k$ is $L_k = L + 2 \lambda \mathcal{I}_k$-smooth. By smoothness:
\begin{equation}
\begin{split}
\tilde{F}_k(\mathbf{w}') &\leq \tilde{F}_k(\mathbf{w}_{t,k}) + \left\langle \nabla \tilde{F}_k(\mathbf{w}_{t,k}), \mathbf{w}' - \mathbf{w}_{t,k} \right\rangle \\
&+ \frac{L_k}{2} \lVert \mathbf{w}' - \mathbf{w}_{t,k} \rVert_2^2.
\end{split}
\end{equation}
Substitute $\mathbf{w}' = \mathbf{w}_{t,k} - \eta g(\mathbf{w}_{t,k}; b)$:
\begin{equation}
\begin{split}
\tilde{F}_k(\mathbf{w}') &\leq \tilde{F}_k(\mathbf{w}_{t,k}) - \eta \left\langle \nabla \tilde{F}_k(\mathbf{w}_{t,k}), g(\mathbf{w}_{t,k}; b) \right\rangle \\
&+ \frac{L_k \eta^2}{2} \lVert g(\mathbf{w}_{t,k}; b) \rVert_2^2.
\end{split}
\end{equation}
Taking expectation over batch $b$:
\begin{equation}
\begin{split}
\mathbf{E}_b g(\mathbf{w}_{t,k}; b) &= \nabla \tilde{F}_k(\mathbf{w}_{t,k}), \\
\mathbf{E}_b \lVert g(\mathbf{w}_{t,k}; b) - \nabla \tilde{F}_k(\mathbf{w}_{t,k}) \rVert_2^2 &\leq \sigma^2.
\end{split}
\end{equation}
Thus:
\begin{equation}
\begin{split}
\mathbf{E}_b \left\langle \nabla \tilde{F}_k(\mathbf{w}_{t,k}), g(\mathbf{w}_{t,k}; b) \right\rangle = \lVert \nabla \tilde{F}_k(\mathbf{w}_{t,k}) \rVert_2^2,
\end{split}
\end{equation}
\begin{equation}
\begin{split}
\mathbf{E}_b \lVert g(\mathbf{w}_{t,k}; b) \rVert_2^2 &= \lVert \nabla \tilde{F}_k(\mathbf{w}_{t,k}) \rVert_2^2 + \mathbf{E}_b \lVert g(\mathbf{w}_{t,k}; b)\\
&- \nabla \tilde{F}_k(\mathbf{w}_{t,k}) \rVert_2^2 \leq \lVert \nabla \tilde{F}_k(\mathbf{w}_{t,k}) \rVert_2^2 + \sigma^2.
\end{split}
\end{equation}
Combining:
\begin{equation}
\begin{split}
\mathbf{E} \tilde{F}_k(\mathbf{w}') &\leq \tilde{F}_k(\mathbf{w}_{t,k}) - \eta \lVert \nabla \tilde{F}_k(\mathbf{w}_{t,k}) \rVert_2^2 \\
&+ \frac{L_k \eta^2}{2} \left( \lVert \nabla \tilde{F}_k(\mathbf{w}_{t,k}) \rVert_2^2 + \sigma^2 \right).
\end{split}
\end{equation}
Since $\eta L_k \leq 1$, we have $\frac{L_k \eta^2}{2} \lVert \nabla \tilde{F}_k(\mathbf{w}_{t,k}) \rVert_2^2 \leq \frac{\eta}{2} \lVert \nabla \tilde{F}_k(\mathbf{w}_{t,k}) \rVert_2^2$. Thus:
\begin{equation}
\begin{split}
\mathbf{E} \tilde{F}_k(\mathbf{w}') \leq \tilde{F}_k(\mathbf{w}_{t,k}) - \frac{\eta}{2} \lVert \nabla \tilde{F}_k(\mathbf{w}_{t,k}) \rVert_2^2 + \frac{\eta^2 L_k \sigma^2}{2}.
\end{split}
\end{equation}
\end{proof}

\begin{lemma}[Gradient Relation]
\label{lemma:gradient_relation}
The gradient of the regularized loss relates to the global gradient:
\begin{equation}
\begin{split}
\mathbf{E} \lVert \nabla \tilde{F}_k(\mathbf{w}) - \nabla f(\mathbf{w}) \rVert_2^2 &\leq 2 \mathbf{E} \lVert \nabla F_k(\mathbf{w}) - \nabla f(\mathbf{w}) \rVert_2^2 \\
&+ 2 (2 \lambda \mathcal{I}_k)^2 \lVert \mathbf{w} - \mathbf{w}_t \rVert_2^2.
\end{split}
\end{equation}
\end{lemma}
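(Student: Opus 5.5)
The plan is to start from the explicit gradient of the regularized loss in \eqref{eq:ri_fedavg_gradient}, namely $\nabla \tilde{F}_k(\mathbf{w}) = \nabla F_k(\mathbf{w}) + 2\lambda \mathcal{I}_k(\mathbf{w}-\mathbf{w}_t)$. Subtracting $\nabla f(\mathbf{w})$ gives the pointwise decomposition
\begin{equation*}
\nabla \tilde{F}_k(\mathbf{w}) - \nabla f(\mathbf{w}) = \bigl(\nabla F_k(\mathbf{w}) - \nabla f(\mathbf{w})\bigr) + 2\lambda \mathcal{I}_k(\mathbf{w}-\mathbf{w}_t).
\end{equation*}
This reduces the claim to bounding the squared norm of a sum of two vectors.

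I will then apply the elementary inequality $\lVert \mathbf{a}+\mathbf{b}\rVert_2^2 \le 2\lVert \mathbf{a}\rVert_2^2 + 2\lVert \mathbf{b}\rVert_2^2$, valid for all $\mathbf{a},\mathbf{b}\in\mathbb{R}^d$. It follows from $\lVert \mathbf{a}-\mathbf{b}\rVert_2^2\ge 0$, or from Young's inequality on the cross term $2\langle \mathbf{a},\mathbf{b}\rangle \le \lVert \mathbf{a}\rVert_2^2+\lVert \mathbf{b}\rVert_2^2$. With $\mathbf{a}=\nabla F_k(\mathbf{w})-\nabla f(\mathbf{w})$ and $\mathbf{b}=2\lambda\mathcal{I}_k(\mathbf{w}-\mathbf{w}_t)$, and using $\lVert \mathbf{b}\rVert_2^2=(2\lambda\mathcal{I}_k)^2\lVert\mathbf{w}-\mathbf{w}_t\rVert_2^2$ (since $\lambda>0$ and $\mathcal{I}_k\ge 0$), this yields the deterministic inequality
\begin{equation*}
\lVert \nabla \tilde{F}_k(\mathbf{w}) - \nabla f(\mathbf{w}) \rVert_2^2 \le 2\lVert \nabla F_k(\mathbf{w}) - \nabla f(\mathbf{w}) \rVert_2^2 + 2(2\lambda\mathcal{I}_k)^2\lVert \mathbf{w}-\mathbf{w}_t\rVert_2^2.
\end{equation*}

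The last step is to take expectations of both sides and use linearity and monotonicity of expectation. The only point needing care is fixing what $\mathbf{E}$ averages over. In the statement the second term carries no expectation, so I will treat $\mathbf{w}$, $\mathbf{w}_t$ and $\mathcal{I}_k$ as fixed (conditioned on) and take $\mathbf{E}$ over the client index $k$, as in Assumption~\ref{ass:non_iid}. The statement keeps $\mathcal{I}_k$ inside the bound, so the cleanest reading is conditional on $\mathcal{I}_k$. If instead the expectation also ranges over $k$ while $\mathcal{I}_k$ varies with $k$, I will bound $\mathcal{I}_k\le\mathcal{I}_{\max}$ via Assumption~\ref{ass:bounded_ri}, which gives the same form with $\mathcal{I}_{\max}$ in place of $\mathcal{I}_k$.

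No real obstacle is expected. The argument is the split-and-square inequality plus this bookkeeping of which quantities are random. As a follow-up for the main theorem, I would note that Assumption~\ref{ass:non_iid} then bounds the first term by $2\zeta^2$.
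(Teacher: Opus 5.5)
Your proposal is correct and is essentially the paper's proof: the same decomposition via the explicit gradient $\nabla\tilde F_k=\nabla F_k+2\lambda\mathcal{I}_k(\mathbf{w}-\mathbf{w}_t)$, the same split $\lVert a+b\rVert_2^2\le 2\lVert a\rVert_2^2+2\lVert b\rVert_2^2$, and then taking expectations. The paper goes one step further and replaces the first term by $2\zeta^2$ via Assumption~\ref{ass:non_iid}, which you note as a follow-up; your explicit bookkeeping of what $\mathbf{E}$ ranges over, including the fallback to $\mathcal{I}_{\max}$ when averaging over $k$, is something the paper leaves implicit.
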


\begin{proof}
Since $\nabla \tilde{F}_k(\mathbf{w}) = \nabla F_k(\mathbf{w}) + 2 \lambda \mathcal{I}_k (\mathbf{w} - \mathbf{w}_t)$, we have:
\begin{equation}
\begin{split}
\nabla \tilde{F}_k(\mathbf{w}) - \nabla f(\mathbf{w}) = \left( \nabla F_k(\mathbf{w}) - \nabla f(\mathbf{w}) \right) + 2 \lambda \mathcal{I}_k (\mathbf{w} - \mathbf{w}_t).
\end{split}
\end{equation}
By the inequality $\lVert a + b \rVert_2^2 \leq 2 \lVert a \rVert_2^2 + 2 \lVert b \rVert_2^2$:
\begin{equation}
\begin{split}
\lVert \nabla \tilde{F}_k(\mathbf{w}) - \nabla f(\mathbf{w}) \rVert_2^2 &\leq 2 \lVert \nabla F_k(\mathbf{w}) - \nabla f(\mathbf{w}) \rVert_2^2 \\
&+ 2 \lVert 2 \lambda \mathcal{I}_k (\mathbf{w} - \mathbf{w}_t) \rVert_2^2.
\end{split}
\end{equation}
Taking expectation and using Assumption~\ref{ass:non_iid}:
\begin{equation}
\begin{split}
\mathbf{E} \lVert \nabla \tilde{F}_k(\mathbf{w}) - \nabla f(\mathbf{w}) \rVert_2^2 \leq 2 \zeta^2 + 2 (2 \lambda \mathcal{I}_k)^2 \lVert \mathbf{w} - \mathbf{w}_t \rVert_2^2.
\end{split}
\end{equation}
\end{proof}

\begin{lemma}[Client Drift Bound]
\label{lemma:client_drift}
For $E$ local epochs with learning rate $\eta$, the expected drift of local updates is bounded:
\begin{equation}
\begin{split}
\mathbf{E} \lVert \mathbf{w}_{t+1,k} - \mathbf{w}_t \rVert_2^2 &\leq 4 E^2 \eta^2 \left( \sigma^2 + G^2+ P \right).
\end{split}
\end{equation}
Where $P = (2 \lambda \mathcal{I}_{\max})^2 \lVert \mathbf{w}_{t,e,k} - \mathbf{w}_t \rVert_2^2$.
\end{lemma}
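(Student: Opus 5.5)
We plan to unroll the local iteration of Algorithm~\ref{alg:ri_fedavg}. We write $\mathbf{w}_{t,e,k}$ for the local iterate after the $e$-th local step, with $\mathbf{w}_{t,0,k}=\mathbf{w}_t$. We treat $E$ as the total number of local SGD steps, which is how the factor $E^2$ should be read. Telescoping gives
\begin{equation*}
\mathbf{w}_{t+1,k}-\mathbf{w}_t=-\eta\sum_{e=0}^{E-1}\Big(g(\mathbf{w}_{t,e,k};b_e)+2\lambda\mathcal{I}_k(\mathbf{w}_{t,e,k}-\mathbf{w}_t)\Big).
\end{equation*}
Applying the Cauchy--Schwarz (Jensen) inequality $\lVert\sum_{e=0}^{E-1}a_e\rVert_2^2\le E\sum_{e=0}^{E-1}\lVert a_e\rVert_2^2$ yields
\begin{equation*}
\lVert\mathbf{w}_{t+1,k}-\mathbf{w}_t\rVert_2^2\le E\eta^2\sum_{e=0}^{E-1}\lVert a_e\rVert_2^2 ,
\end{equation*}
where $a_e$ denotes the regularized stochastic gradient at step $e$.

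Next, we split each summand into three pieces:
\begin{equation*}
a_e=\big(g(\mathbf{w}_{t,e,k};b_e)-\nabla F_k(\mathbf{w}_{t,e,k})\big)+\nabla F_k(\mathbf{w}_{t,e,k})+2\lambda\mathcal{I}_k(\mathbf{w}_{t,e,k}-\mathbf{w}_t).
\end{equation*}
We bound $\lVert a_e\rVert_2^2$ with a three-term version of $\lVert a+b\rVert_2^2\le 2\lVert a\rVert_2^2+2\lVert b\rVert_2^2$; the loose constant $4$ absorbs a factor of at most $3$. Taking expectations, we handle each piece separately:
\begin{itemize}
\item the noise term is at most $\sigma^2$ by Assumption~\ref{ass:variance};
\item the gradient term is at most $G^2$ by Assumption~\ref{ass:bounded_gradients};
\item the regularization term is at most $(2\lambda\mathcal{I}_{\max})^2\lVert\mathbf{w}_{t,e,k}-\mathbf{w}_t\rVert_2^2=P$ by Assumption~\ref{ass:bounded_ri}.
\end{itemize}
Here $P$ is read as the (worst-case over $e$) regularization magnitude, as in the statement.

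Summing $E$ identical bounds gives $E\eta^2\cdot E\cdot c(\sigma^2+G^2+P)$ with $c\le 4$, which is the claim.

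The main obstacle is that $P$ still contains the drift $\lVert\mathbf{w}_{t,e,k}-\mathbf{w}_t\rVert_2^2$ itself, so the bound is implicit. We accept this as stated, interpreting $P$ via $\max_e$. If a closed form is wanted, the first thing to try is to substitute the same bound recursively at step $e$. Under $4E^2\eta^2(2\lambda\mathcal{I}_{\max})^2\le 1/2$, the drift term can be absorbed into the left side, giving a self-contained bound of $8E^2\eta^2(\sigma^2+G^2)$. A second, minor point is that the cross term between noise and gradient has zero mean, so a sharper constant is possible. We will not pursue it, since the constant $4$ already suffices.
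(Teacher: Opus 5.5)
Your proposal is correct and is essentially the paper's argument: telescope the $E$ local steps, apply $\lVert\sum_{e}a_e\rVert_2^2\le E\sum_{e}\lVert a_e\rVert_2^2$, split the regularized stochastic gradient, and bound the pieces via Assumptions~\ref{ass:variance}, \ref{ass:bounded_ri} and~\ref{ass:bounded_gradients}. The only difference is that the paper uses a two-term split together with the bias--variance decomposition $\mathbf{E}\lVert\nabla\ell(\mathbf{w}_{t,e,k};b_e)\rVert_2^2\le G^2+\sigma^2$ (exactly your zero-mean cross-term remark), which gives the constant $2$ instead of your $3$; your reading of $P$ as a maximum over $e$ of the expected regularization term, and your absorption step under $4E^2\eta^2(2\lambda\mathcal{I}_{\max})^2\le 1/2$, make precise what the paper only gestures at with ``bound it iteratively for small $\eta$''.
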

\begin{proof}
Let $\mathbf{w}_{t,e,k}$ denote the parameters after the $e$-th local epoch for client $k$, starting from $\mathbf{w}_{t,0,k} = \mathbf{w}_t$. The update rule is:
\begin{equation}
\begin{split}
\mathbf{w}_{t,e+1,k} = \mathbf{w}_{t,e,k} - \eta g(\mathbf{w}_{t,e,k}; b_e),
\end{split}
\end{equation}
where $g(\mathbf{w}_{t,e,k}; b_e) = \nabla \ell(\mathbf{w}_{t,e,k}; b_e) + 2 \lambda \mathcal{I}_k (\mathbf{w}_{t,e,k} - \mathbf{w}_t)$. The drift after $E$ epochs is:
\begin{equation}
\begin{split}
\mathbf{w}_{t+1,k} - \mathbf{w}_t &= \sum_{e=0}^{E-1} (\mathbf{w}_{t,e+1,k} - \mathbf{w}_{t,e,k}) \\
&= - \eta \sum_{e=0}^{E-1} g(\mathbf{w}_{t,e,k}; b_e).
\end{split}
\end{equation}
Thus:
\begin{equation}
\begin{split}
\lVert \mathbf{w}_{t+1,k} - \mathbf{w}_t \rVert_2^2 = \eta^2 \lVert \sum_{e=0}^{E-1} g(\mathbf{w}_{t,e,k}; b_e) \rVert_2^2.
\end{split}
\end{equation}
By Jensen’s inequality:
\begin{equation}
\begin{split}
\mathbf{E} \lVert \sum_{e=0}^{E-1} g(\mathbf{w}_{t,e,k}; b_e) \rVert_2^2 \leq E \sum_{e=0}^{E-1} \mathbf{E} \lVert g(\mathbf{w}_{t,e,k}; b_e) \rVert_2^2.
\end{split}
\end{equation}
For each epoch:
\begin{equation}
\begin{split}
\mathbf{E} \lVert g(\mathbf{w}_{t,e,k}; b_e) \rVert_2^2 &= \mathbf{E} \lVert \nabla \ell(\mathbf{w}_{t,e,k}; b_e) + 2 \lambda \mathcal{I}_k (\mathbf{w}_{t,e,k} - \mathbf{w}_t) \rVert_2^2 \\
&\leq 2 \mathbf{E} \lVert \nabla \ell(\mathbf{w}_{t,e,k}; b_e) \rVert_2^2 \\
&+ 2 (2 \lambda \mathcal{I}_k)^2 \lVert \mathbf{w}_{t,e,k} - \mathbf{w}_t \rVert_2^2.
\end{split}
\end{equation}
By Assumptions~\ref{ass:variance} and \ref{ass:bounded_gradients}:
\begin{equation}
\begin{split}
\mathbf{E} \lVert \nabla \ell(\mathbf{w}_{t,e,k}; b_e) \rVert_2^2 \leq \mathbf{E} \lVert \nabla F_k(\mathbf{w}_{t,e,k}) \rVert_2^2 + \sigma^2 \leq G^2 + \sigma^2.
\end{split}
\end{equation}
Thus:
\begin{equation}
\begin{split}
\mathbf{E} \lVert g(\mathbf{w}_{t,e,k}; b_e) \rVert_2^2 &\leq 2 (G^2 + \sigma^2) \\
&+ 2 (2 \lambda \mathcal{I}_{\max})^2 \mathbf{E} \lVert \mathbf{w}_{t,e,k} - \mathbf{w}_t \rVert_2^2.
\end{split}
\end{equation}
Summing over $E$ epochs and applying Jensen’s inequality:
\begin{align}
\mathbf{E} \lVert \mathbf{w}_{t+1,k} - \mathbf{w}_t \rVert_2^2 &\leq \eta^2 E \cdot E \Big[ 2 (G^2 + \sigma^2) \notag \\
&\quad + 2 (2 \lambda \mathcal{I}_{\max})^2 \mathbf{E} \lVert \mathbf{w}_{t,e,k} - \mathbf{w}_t \rVert_2^2 \Big] \\
&= 2 E^2 \eta^2 \Big[ \sigma^2 + G^2 \notag \\
&\quad + (2 \lambda \mathcal{I}_{\max})^2 \lVert \mathbf{w}_{t,e,k} - \mathbf{w}_t \rVert_2^2 \Big].
\end{align}
Since $\lVert \mathbf{w}_{t,e,k} - \mathbf{w}_t \rVert_2^2$ is recursive, we bound it iteratively for small $\eta$, yielding:
\begin{equation}
\begin{split}
\mathbf{E} \lVert \mathbf{w}_{t+1,k} - \mathbf{w}_t \rVert_2^2 \leq 4 E^2 \eta^2 \left( \sigma^2 + G^2 + P \right).
\end{split}
\end{equation}
\end{proof}

\begin{theorem}[Convergence of RI-FedAvg]
\label{thm:convergence}
Under Assumptions~\ref{ass:smoothness}--\ref{ass:bounded_gradients}, for RI-FedAvg with learning rate $\eta \leq \frac{1}{2 (L + 2 \lambda \mathcal{I}_{\max})}$, after $T$ rounds with $E$ local epochs, the average expected gradient norm satisfies:
\begin{align}
\frac{1}{T} \sum_{t=0}^{T-1} \mathbf{E} \lVert \nabla f(\mathbf{w}_t) \rVert_2^2 &\leq \frac{2 (f(\mathbf{w}_0) - f^*)}{\eta T E} \notag \\
&\quad + \frac{2 \eta (L + 2 \lambda \mathcal{I}_{\max}) \sigma^2}{E} \notag \\
&\quad + 8 L E \eta \Bigl( \sigma^2 + G^2 + \zeta^2 \notag \\
&\quad\quad + (2 \lambda \mathcal{I}_{\max})^2 \lVert \mathbf{w}_{t,e,k} - \mathbf{w}_t \rVert_2^2 \Bigr).
\end{align}
\end{theorem}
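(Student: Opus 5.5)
We prove the bound with the standard one-round descent argument on the global objective $f$, treating the aggregated update as a perturbed gradient step. Write the round-$t$ update as $\mathbf{w}_{t+1}-\mathbf{w}_t = -\eta\sum_{k\in S_t}\frac{n_k}{n_t}\sum_{e=0}^{E-1} g(\mathbf{w}_{t,e,k};b_e)$. Apply $L$-smoothness of $f$ (Assumption~\ref{ass:smoothness}):
\[
f(\mathbf{w}_{t+1}) \le f(\mathbf{w}_t) + \langle \nabla f(\mathbf{w}_t), \mathbf{w}_{t+1}-\mathbf{w}_t\rangle + \tfrac{L}{2}\lVert \mathbf{w}_{t+1}-\mathbf{w}_t\rVert_2^2 .
\]
Then take conditional expectation over batches and client sampling. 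For the sampling, we assume it is unbiased in the sense that $\mathbf{E}_{S_t}\sum_{k\in S_t}\frac{n_k}{n_t}\nabla F_k = \nabla f$, an implicit modeling assumption we would state explicitly.

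\textbf{Inner-product term.} Add and subtract $E\,\nabla f(\mathbf{w}_t)$ so that the cross term becomes
\[
-\eta E\lVert\nabla f(\mathbf{w}_t)\rVert_2^2 \;-\; \eta\sum_e\Bigl\langle \nabla f(\mathbf{w}_t),\, \bar\nabla_e - \nabla f(\mathbf{w}_t)\Bigr\rangle ,
\]
where $\bar\nabla_e$ is the aggregated regularized gradient $\sum_k \frac{n_k}{n_t}\nabla\tilde F_k(\mathbf{w}_{t,e,k})$. Bound the error by Young's inequality $\langle a,b\rangle \le \frac12\lVert a\rVert^2+\frac12\lVert b\rVert^2$. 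This leaves $-\frac{\eta E}{2}\lVert\nabla f(\mathbf{w}_t)\rVert_2^2$ plus $\frac{\eta}{2}\sum_e\lVert\bar\nabla_e-\nabla f(\mathbf{w}_t)\rVert_2^2$. Split the latter as
\[
\nabla\tilde F_k(\mathbf{w}_{t,e,k})-\nabla F_k(\mathbf{w}_t) \quad\text{plus}\quad \nabla F_k(\mathbf{w}_t)-\nabla f(\mathbf{w}_t).
\]
The first piece is at most $(L+2\lambda\mathcal{I}_{\max})\lVert \mathbf{w}_{t,e,k}-\mathbf{w}_t\rVert$ by smoothness of $\tilde F_k$ (Lemma~\ref{lemma:local_update}); equivalently, Lemma~\ref{lemma:gradient_relation} handles the regularizer part. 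The second piece is at most $\zeta^2$ in expectation (Assumption~\ref{ass:non_iid}). The drift is then controlled by Lemma~\ref{lemma:client_drift}: $4E^2\eta^2(\sigma^2+G^2+P)$.

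\textbf{Quadratic term.} Use Jensen over clients and epochs, together with the computation $\mathbf{E}\lVert g\rVert^2 \le \lVert\nabla\tilde F_k\rVert^2+\sigma^2$ from Lemma~\ref{lemma:local_update}. This produces an $\eta^2 E(L+2\lambda\mathcal{I}_{\max})\sigma^2/2$ noise term. It also produces a gradient-squared term, which the step-size condition $\eta\le \frac{1}{2(L+2\lambda\mathcal{I}_{\max})}$ lets us absorb into half of the negative descent term; this is exactly why the factor $2$ appears in the step-size hypothesis.

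\textbf{Combining.} We obtain
\[
\frac{\eta E}{2}\,\mathbf{E}\lVert\nabla f(\mathbf{w}_t)\rVert_2^2 \le \mathbf{E}f(\mathbf{w}_t)-\mathbf{E}f(\mathbf{w}_{t+1}) + \eta^2(L+2\lambda\mathcal{I}_{\max})\sigma^2 + 4L E^2\eta^2(\sigma^2+G^2+\zeta^2+P).
\]
Telescoping over $t=0,\dots,T-1$, using $f\ge f^*$ (Assumption~\ref{ass:bounded_loss}), and dividing by $\eta E T/2$ gives the three terms of the theorem with constants $2$, $2/E$ and $8LE\eta$. The $L$ in the last term comes from converting drift into gradient error via smoothness, with the $L^2\eta^2$ factor traded for $L\eta$ using $\eta L\le 1$.

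\textbf{Main obstacle.} The delicate step is the drift/heterogeneity term. Lemma~\ref{lemma:client_drift} still contains $P$, which depends on the iterate distance $\lVert\mathbf{w}_{t,e,k}-\mathbf{w}_t\rVert_2^2$ itself. I would simply carry $P$ through as stated, which is how the theorem is written. A cleaner alternative is to close the recursion: for $\eta E\cdot 2\lambda\mathcal{I}_{\max}\le 1/4$, absorb $P$ into the left-hand side and obtain a constant bound. I would also need to track the constants from the two Young's splits carefully so that $\zeta^2$ enters with the same $8LE\eta$ coefficient as $\sigma^2+G^2$. Where the constants do not match exactly, I would loosen every bound to the larger constant, since the stated bound is only an upper bound.
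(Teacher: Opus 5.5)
\textbf{Verdict: there is a genuine gap, and the last part cannot be repaired, because the stated bound does not follow from Assumptions~\ref{ass:smoothness}--\ref{ass:bounded_gradients}.}

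\textbf{Comparison with the paper.} Your skeleton is: descent lemma for $f$ on the aggregated update, Young's inequality on the gradient error, and drift from Lemma~\ref{lemma:client_drift}. The paper instead chains Lemma~\ref{lemma:local_update} over $E$ epochs on $\tilde F_k$, then uses $\mathbf{E} f(\mathbf{w}_{t+1})=\sum_{k\in S_t}\frac{n_k}{n_t}\mathbf{E}F_k(\mathbf{w}_{t+1,k})$, which does not hold in general. Yours is the sounder route. But it does not yield the stated bound, and the mismatches are in powers of $\eta$, not in constants, so ``loosening to the larger constant'' cannot close them.

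\textbf{The $\zeta^2$ term (repairable).} You apply Young's inequality to the whole error, including $\nabla F_k(\mathbf{w}_t)-\nabla f(\mathbf{w}_t)$. That costs $\eta E\zeta^2$ per round, i.e.\ $2\zeta^2$ after dividing by $\eta ET/2$. This exceeds $8LE\eta\zeta^2$ whenever $4LE\eta<1$, so the $4LE^2\eta^2\zeta^2$ in your combined display does not follow from your steps. The paper hits exactly this wall: its penultimate display contains $\frac{4(f(\mathbf{w}_0)-f^*)}{\eta ET}+2\zeta^2+4\eta L_k\sigma^2$, and its final ``simplifying'' step to the stated form is unjustified. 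To fix it, use the unbiased-sampling assumption you state but never use. It gives $\mathbf{E}\langle\nabla f(\mathbf{w}_t),\sum_{k\in S_t}\frac{n_k}{n_t}\nabla F_k(\mathbf{w}_t)-\nabla f(\mathbf{w}_t)\rangle=0$ before any Young step, so heterogeneity enters only through $\frac{L}{2}\mathbf{E}\lVert\mathbf{w}_{t+1}-\mathbf{w}_t\rVert_2^2$.

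\textbf{The absorption step (repairable).} The gradient part of that quadratic term is of order $L\eta^2E^2\lVert\nabla f(\mathbf{w}_t)\rVert_2^2$. Absorbing it needs $LE\eta\lesssim 1$, which $\eta\le\frac{1}{2(L+2\lambda\mathcal{I}_{\max})}$ does not provide. You can, however, assume $LE\eta<\frac18$ for free. By Assumption~\ref{ass:bounded_gradients} and Jensen, $\lVert\nabla f\rVert_2^2\le G^2$, so the claim is trivial when $8LE\eta\ge 1$. The same reduction turns the $L^2E^2\eta^2$ from the $L$-part of your drift conversion into $O(LE\eta)$.

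\textbf{The regularizer term (not repairable).} Split $\nabla\tilde F_k(\mathbf{w}_{t,e,k})-\nabla F_k(\mathbf{w}_t)$ as in Lemma~\ref{lemma:gradient_relation}. The piece $2\lambda\mathcal{I}_k(\mathbf{w}_{t,e,k}-\mathbf{w}_t)$ leaves $(2\lambda\mathcal{I}_{\max})^2\lVert\mathbf{w}_{t,e,k}-\mathbf{w}_t\rVert_2^2$ with an $O(1)$ coefficient. No use of $\eta L\le 1$ turns this into $8LE\eta(\cdots)$; the relevant constant is $L+2\lambda\mathcal{I}_{\max}$, which you use yourself, not $L$. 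Neither proof uses anything about $\mathcal{I}_k$ beyond Assumption~\ref{ass:bounded_ri}, so the following instance is admissible:
\begin{itemize}
\item $K=1$, $d=1$, exact gradients, so $\sigma=\zeta=0$;
\item the Huber loss $f(w)=\frac{L}{2}w^2$ for $|w|\le G/L$ and $f(w)=G|w|-\frac{G^2}{2L}$ otherwise;
\item $c:=2\lambda\mathcal{I}_1=2\lambda\mathcal{I}_{\max}$ and $\eta=\frac{1}{2(L+c)}$.
\end{itemize}
On the linear piece the local iterates satisfy $w_{t,e}-w_t=-\frac{G}{c}\bigl(1-(1-\eta c)^e\bigr)$, so each round moves less than $G/c$. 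Starting from $w_0=G/L+TG/c$, every $\nabla f(w_t)$ equals $G$, and the left side is $G^2$. Any reading of the free drift symbol is below $G^2/c^2$. So the right side is at most $G^2\bigl[\frac{4(L+c)}{TEL}+\frac{4(L+c)}{Ec}+\frac{8LE}{L+c}\bigr]$, which is about $0.46\,G^2$ for $L=1$, $c=1000$, $E=20$, $T=2000$.

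The mechanism is that the proximal pull shrinks the effective per-round step from $\eta E$ to roughly $\min\{\eta E,1/(2\lambda\mathcal{I}_k)\}$. No $O(LE\eta)$ error term can account for that. A correct statement needs an additional error term, or a step-size restriction, involving $\lambda\mathcal{I}_{\max}E\eta$.
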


\begin{proof}
By $L$-smoothness of $f$:
\begin{equation}
\begin{split}
f(\mathbf{w}_{t+1}) &\leq f(\mathbf{w}_t) + \langle \nabla f(\mathbf{w}_t), \mathbf{w}_{t+1} - \mathbf{w}_t \rangle \\
&+ \frac{L}{2} \lVert \mathbf{w}_{t+1} - \mathbf{w}_t \rVert_2^2.
\end{split}
\end{equation}
Since $\mathbf{w}_{t+1} = \sum_{k \in S_t} \frac{n_k}{n_t} \mathbf{w}_{t+1,k}$:
\begin{equation}
\begin{split}
\mathbf{w}_{t+1} - \mathbf{w}_t = \sum_{k \in S_t} \frac{n_k}{n_t} (\mathbf{w}_{t+1,k} - \mathbf{w}_t).
\end{split}
\end{equation}
Taking expectation:
\begin{equation}
\begin{split}
\mathbf{E} \langle \nabla f(\mathbf{w}_t), \mathbf{w}_{t+1} - \mathbf{w}_t \rangle = \sum_{k \in S_t} \frac{n_k}{n_t} \mathbf{E} \langle \nabla f(\mathbf{w}_t), \mathbf{w}_{t+1,k} - \mathbf{w}_t \rangle.
\end{split}
\end{equation}
By Lemma~\ref{lemma:local_update}, for $E$ epochs, we extend the one-step bound:
\begin{equation}
\begin{split}
\mathbf{E} \tilde{F}_k(\mathbf{w}_{t+1,k}) \leq \tilde{F}_k(\mathbf{w}_t) - \frac{\eta E}{2} \mathbf{E} \lVert \nabla \tilde{F}_k(\mathbf{w}_t) \rVert_2^2 + \frac{\eta^2 E L_k \sigma^2}{2}.
\end{split}
\end{equation}
Since $\tilde{F}_k(\mathbf{w}_{t+1,k}) = F_k(\mathbf{w}_{t+1,k}) + \lambda \mathcal{I}_k \lVert \mathbf{w}_{t+1,k} - \mathbf{w}_t \rVert_2^2$ and $\tilde{F}_k(\mathbf{w}_t) = F_k(\mathbf{w}_t)$:
\begin{equation}
\begin{split}
\mathbf{E} F_k(\mathbf{w}_{t+1,k}) &\leq F_k(\mathbf{w}_t) - \frac{\eta E}{2} \mathbf{E} \lVert \nabla \tilde{F}_k(\mathbf{w}_t) \rVert_2^2 \\
&\quad + \frac{\eta^2 E L_k \sigma^2}{2} + \lambda \mathcal{I}_k \mathbf{E} \lVert \mathbf{w}_{t+1,k} - \mathbf{w}_t \rVert_2^2.
\end{split}
\end{equation}
By Lemma~\ref{lemma:gradient_relation}, for $\mathbf{w} = \mathbf{w}_t$:

\begin{multline}
\mathbf{E} \lVert \nabla \tilde{F}_k(\mathbf{w}_t) - \nabla f(\mathbf{w}_t) \rVert_2^2 = \mathbf{E} \lVert \nabla F_k(\mathbf{w}_t) \\
+ 2 \lambda \mathcal{I}_k (\mathbf{w}_t - \mathbf{w}_{t-1}) - \nabla f(\mathbf{w}_t) \rVert_2^2 \\
= \mathbf{E} \lVert \nabla F_k(\mathbf{w}_t) - \nabla f(\mathbf{w}_t) \rVert_2^2 \leq \zeta^2.
\end{multline}

Thus:
\begin{equation}
\begin{split}
\mathbf{E} \lVert \nabla \tilde{F}_k(\mathbf{w}_t) \rVert_2^2 \geq \frac{1}{2} \mathbf{E} \lVert \nabla f(\mathbf{w}_t) \rVert_2^2 - \zeta^2.
\end{split}
\end{equation}
Combining:
\begin{equation}
\begin{split}
\mathbf{E} F_k(\mathbf{w}_{t+1,k}) &\leq F_k(\mathbf{w}_t) - \frac{\eta E}{4} \mathbf{E} \lVert \nabla f(\mathbf{w}_t) \rVert_2^2 + \frac{\eta E \zeta^2}{2} \\
&\quad + \frac{\eta^2 E L_k \sigma^2}{2} + \lambda \mathcal{I}_k \mathbf{E} \lVert \mathbf{w}_{t+1,k} - \mathbf{w}_t \rVert_2^2.
\end{split}
\end{equation}
By Lemma~\ref{lemma:client_drift}:
\begin{multline}
\mathbf{E} \lVert \mathbf{w}_{t+1,k} - \mathbf{w}_t \rVert_2^2 \leq 4 E^2 \eta^2 \Bigl( \sigma^2 + G^2 \\
+ (2 \lambda \mathcal{I}_{\max})^2 \lVert \mathbf{w}_{t,e,k} - \mathbf{w}_t \rVert_2^2 \Bigr).
\end{multline}
Aggregating across clients:

\begin{equation}
\begin{split}
\mathbf{E} f(\mathbf{w}_{t+1}) &= \sum_{k \in S_t} \frac{n_k}{n_t} \mathbf{E} F_k(\mathbf{w}_{t+1,k}) \\
&\leq \mathbf{E} f(\mathbf{w}_t) - \frac{\eta E}{4} \mathbf{E} \lVert \nabla f(\mathbf{w}_t) \rVert_2^2 \\
&\quad + \frac{\eta E \zeta^2}{2} + \frac{\eta^2 E L_k \sigma^2}{2} \\
&\quad + \lambda \mathcal{I}_{\max} \cdot 4 E^2 \eta^2 \Bigl( \sigma^2 + G^2 \\
&\quad\quad + (2 \lambda \mathcal{I}_{\max})^2 \lVert \mathbf{w}_{t,e,k} - \mathbf{w}_t \rVert_2^2 \Bigr).
\end{split}
\end{equation}

The global update drift is:
\begin{multline}
\mathbf{E} \lVert \mathbf{w}_{t+1} - \mathbf{w}_t \rVert_2^2 = \mathbf{E} \lVert \sum_{k \in S_t} \frac{n_k}{n_t} (\mathbf{w}_{t+1,k} - \mathbf{w}_t) \rVert_2^2 \\
\leq \sum_{k \in S_t} \frac{n_k}{n_t} \mathbf{E} \lVert \mathbf{w}_{t+1,k} - \mathbf{w}_t \rVert_2^2 \\
\leq 4 E^2 \eta^2 \Bigl( \sigma^2 + G^2 \\
+ (2 \lambda \mathcal{I}_{\max})^2 \lVert \mathbf{w}_{t,e,k} - \mathbf{w}_t \rVert_2^2 \Bigr).
\end{multline}
Thus:

\begin{multline}
\mathbf{E} f(\mathbf{w}_{t+1}) \leq \mathbf{E} f(\mathbf{w}_t) - \frac{\eta E}{4} \mathbf{E} \lVert \nabla f(\mathbf{w}_t) \rVert_2^2 \\
+ \eta E \Bigl( \frac{\zeta^2}{2} + \eta L_k \sigma^2 \\
+ 2 L E \eta \bigl( \sigma^2 + G^2 \\
+ (2 \lambda \mathcal{I}_{\max})^2 \lVert \mathbf{w}_{t,e,k} - \mathbf{w}_t \rVert_2^2 \bigr) \Bigr).
\end{multline}

Summing over $T$ rounds:
\begin{multline}
\sum_{t=0}^{T-1} \frac{\eta E}{4} \mathbf{E} \lVert \nabla f(\mathbf{w}_t) \rVert_2^2 \leq f(\mathbf{w}_0) - f^* \\
+ \eta E T \Bigl( \frac{\zeta^2}{2} + \eta L_k \sigma^2 \\
+ 2 L E \eta \bigl( \sigma^2 + G^2 \\
+ (2 \lambda \mathcal{I}_{\max})^2 \lVert \mathbf{w}_{t,e,k} - \mathbf{w}_t \rVert_2^2 \bigr) \Bigr).
\end{multline}

Dividing by $\frac{\eta E T}{4}$:
\begin{multline}
\frac{1}{T} \sum_{t=0}^{T-1} \mathbf{E} \lVert \nabla f(\mathbf{w}_t) \rVert_2^2 \leq \frac{4 (f(\mathbf{w}_0) - f^*)}{\eta E T} \\
+ 2 \zeta^2 + 4 \eta L_k \sigma^2 \\
+ 8 L E \eta \Bigl( \sigma^2 + G^2 \\
+ (2 \lambda \mathcal{I}_{\max})^2 \lVert \mathbf{w}_{t,e,k} - \mathbf{w}_t \rVert_2^2 \Bigr).
\end{multline}

Substituting $L_k = L + 2 \lambda \mathcal{I}_{\max}$ and simplifying:

\begin{multline}
\frac{1}{T} \sum_{t=0}^{T-1} \mathbf{E} \lVert \nabla f(\mathbf{w}_t) \rVert_2^2 \leq \frac{2 (f(\mathbf{w}_0) - f^*)}{\eta T E} \\
+ \frac{2 \eta (L + 2 \lambda \mathcal{I}_{\max}) \sigma^2}{E} \\
+ 8 L E \eta \Bigl( \sigma^2 + G^2 + \zeta^2 \\
+ (2 \lambda \mathcal{I}_{\max})^2 \lVert \mathbf{w}_{t,e,k} - \mathbf{w}_t \rVert_2^2 \Bigr).
\end{multline}
\end{proof}
%%%%%%%%%%%%%%%%%%%%%%%%%%%%%%%%%%%%%%%%%%%%%%%%%%%%%%%%%%%%%%%%%%%%%%%%%%%%%%%%%%%%%%%%%%%%%%%%%%%%%%%%%%%%%%%%%%%%%%%%%%%%%%%%%%%%%%%%%%%%%%%%%%%%%%%%%%%%%%%%%%%%%%%%%%%%%%%%%%%%%%
% Experiments
\section{Experiments}
\label{sec:experiments}

We evaluate the performance of the RI-FedAvg algorithm on three non-IID datasets MNIST, CIFAR-10, and CIFAR-100 using a convolutional neural network (CNN) model to assess its effectiveness in addressing data heterogeneity in federated learning. The experimental setup compares RI-FedAvg against established baselines, including FedAvg \cite{mcmahan2017communication}, FedProx \cite{li2020federated}, FedDyn\cite{feddyn}, SCAFFOLD, and DP-FedAvg, focusing on model accuracy, convergence behavior, and robustness to non-IID data distributions. By incorporating MNIST for digit classification, CIFAR-10 for basic object recognition, and CIFAR-100 for fine-grained object classification, the experiments capture a range of task complexities. Detailed configurations for datasets, model architecture, hyperparameters, and evaluation metrics are provided, with results analyzed to demonstrate the impact of Roughness Index (RI) regularization in enhancing federated learning performance across diverse scenarios.

\subsection{Experimental Setup}
\label{subsec:exp_setup}

The experiments utilize three image classification datasets: MNIST, CIFAR-10, and CIFAR-100, each partitioned in a non-IID manner using a Dirichlet distribution with a concentration parameter $\alpha = 0.1$ (unless otherwise specified, e.g., $\alpha = 0.5$ for some MNIST experiments) to simulate real-world data heterogeneity. MNIST comprises 60,000 training and 10,000 test grayscale images of handwritten digits (0--9). CIFAR-10 includes 50,000 training and 10,000 test RGB images across 10 object classes, while CIFAR-100 extends this to 100 fine-grained classes with the same number of images, increasing task complexity. The model is a CNN with two convolutional layers (32 and 64 filters, respectively), each followed by ReLU activation and max-pooling, and two fully connected layers (512 units and an output layer with 10 units for MNIST and CIFAR-10, or 100 units for CIFAR-100, using softmax). The federated learning setup involves $K=100$ clients, with a client participation fraction $C=0.1$ (10 clients per round), unless specified otherwise (e.g., $C=0.5$ in some experiments). Each client performs $E=5$ local epochs with a learning rate $\eta=0.01$ and batch size $B=128$. For RI-FedAvg, the regularization strength is $\lambda=0.1$, with RI parameters set to $M=10$ random directions, projection interval length $l=0.01$, and $m=19$ discretization points, unless varied in specific experiments. Baselines include FedAvg (standard weighted averaging), FedProx (with a proximal term for heterogeneity), FedDyn (dynamic regularization), SCAFFOLD (variance reduction), and DP-FedAvg (with differential privacy). Performance is evaluated using test accuracy and loss over $T=100$ communication rounds (unless specified, e.g., 20, 50, 150, or 200 rounds), with metrics averaged over three random seeds for reproducibility.

\subsection{MNIST}
\label{subsec:mnist}

\begin{figure}[!t]
\centering
\includegraphics[width=2.5in]{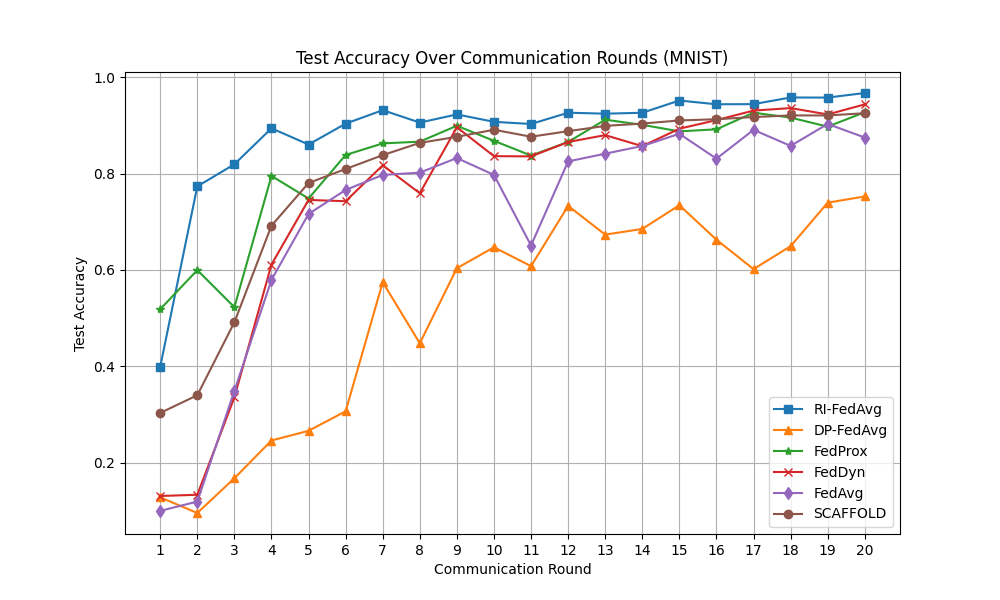}
\caption{Test Accuracy Evolution over 50 Communication Rounds for Non-IID MNIST}
\label{fig:mnist}
\end{figure}

The MNIST dataset, with its 10-digit classes and relatively simple structure, provides a straightforward yet revealing benchmark for federated learning methods like RI-FedAvg, DP-FedAvg, FedProx, FedDyn, FedAvg, and SCAFFOLD over 50 rounds, as shown in Figure~\ref{fig:mnist}. DP-FedAvg demonstrates robust performance, starting at a test accuracy of 0.1282 and reaching 0.7525, with early rounds showing significant promise (e.g., 0.0958 at round 2 vs. FedAvg’s 0.1195). FedAvg, the baseline, begins at 0.1002 and ends at 0.8747, maintaining competitive accuracy but with fluctuations (e.g., 0.3492 at round 3). FedDyn, leveraging dynamic updates, starts at 0.1311 and climbs to 0.9435, showing steady but slightly lower accuracy. FedProx, designed for heterogeneity, opens at 0.5186 and reaches 0.9272, often trailing others (e.g., 0.7951 at round 4 vs. DP-FedAvg’s 0.2462). SCAFFOLD, addressing client drift, starts at 0.3034 and ends at 0.9252, with consistent progress. RI-FedAvg, with its robust aggregation, starts at 0.3993 and achieves the highest final test accuracy of 0.9671, despite higher early losses (e.g., 0.8193 at round 3).

The MNIST results highlight how each method navigates the dataset’s simplicity and federated learning challenges, revealing their suitability for practical deployment. DP-FedAvg’s strong early performance and reasonable final accuracy (0.7525) make it ideal for privacy-sensitive MNIST tasks, balancing security with performance. FedAvg’s consistent performance, despite occasional spikes, suits simpler federated scenarios where computational efficiency is key. FedDyn’s dynamic approach yields high accuracy, suggesting benefits even on MNIST’s relatively homogeneous data. FedProx’s high starting loss and moderate convergence indicate it struggles to capitalize on MNIST’s structure compared to FedAvg. SCAFFOLD’s steady improvement positions it as a reliable choice for stable training. RI-FedAvg’s superior final accuracy (0.9671) underscores its effectiveness for MNIST, leveraging Roughness Index regularization to mitigate non-IID challenges.

\begin{figure}[!t]
\centering
\includegraphics[width=2.5in]{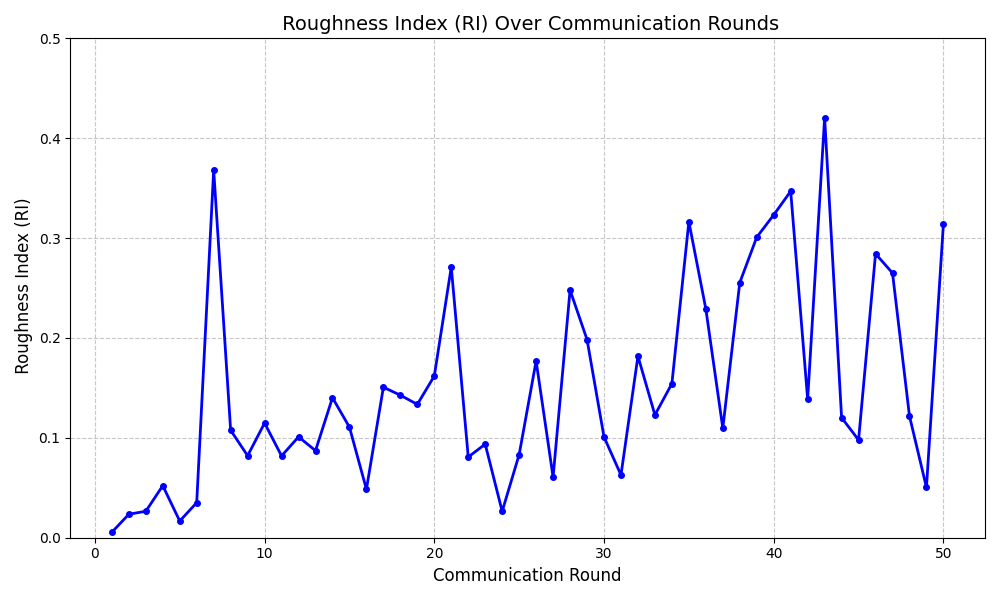}
\caption{Roughness Index Evolution over Communication Rounds on MNIST.}
\label{fig:mnist_ri}
\end{figure}
Figure~\ref{fig:mnist_ri} illustrates the evolution of the Roughness Index ($\mathcal{I}_k$) over communication rounds for the MNIST dataset, highlighting the loss landscape characteristics during RI-FedAvg training.

\begin{table}[h]
\centering
\caption{Final Test Accuracy for Non-IID MNIST with $\alpha=0.5$ over 20 Rounds.}
\label{tab:mnist_accuracy}
\begin{tabular}{lcc}
\toprule
\textbf{Algorithm} & \textbf{10 Clients} & \textbf{100 Clients} \\
\midrule
FedProx & 0.9109 & 0.8863 \\
FedAvg & 0.8820 & 0.6783 \\
RI-FedAvg & 0.9671 & 0.8416 \\
DP-FedAvg & 0.7796 & 0.5193 \\
FedDyn & 0.9435 & 0.8211 \\
SCAFFOLD & 0.9252 & 0.8122 \\
\bottomrule
\end{tabular}
\end{table}

For the non-IID MNIST dataset with $\alpha=0.5$, RI-FedAvg achieves the highest final test accuracy across both 10 and 100 clients, with 0.9671 and 0.8416, respectively, showcasing its robustness to client scale and data heterogeneity, as shown in Table~\ref{tab:mnist_accuracy}. FedDyn follows closely, with strong performances of 0.9435 (10 clients) and 0.8211 (100 clients), while SCAFFOLD also performs well at 0.9252 and 0.8122. FedProx maintains respectable accuracies of 0.9109 and 0.8863, outperforming FedAvg, which struggles significantly with 100 clients at 0.6783 despite a decent 0.8820 with 10 clients. DP-FedAvg performs the worst, with 0.7796 and 0.5193, due to privacy constraints impacting convergence, especially with more clients.

\begin{figure}[!t]
\centering
\includegraphics[width=2.5in]{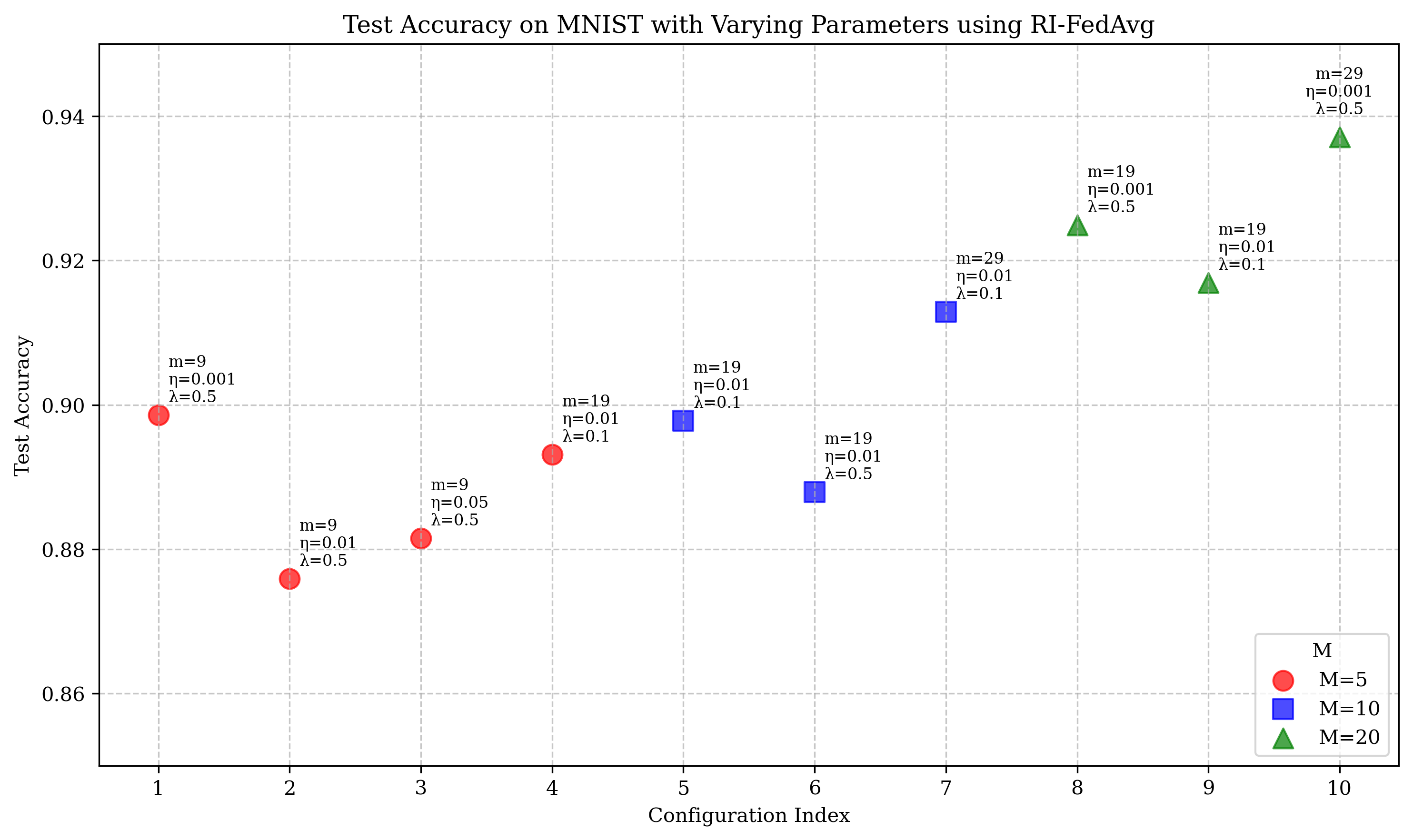}
\caption{Test Accuracy of RI-FedAvg on Non-IID MNIST after 50 Rounds across Varying Parameters ($M$, $m$, $\lambda$, $\eta$).}
\label{fig:mnist_accuracy_scatter}
\end{figure}

Figure~\ref{fig:mnist_accuracy_scatter} showcases the test accuracies of the RI-FedAvg algorithm on the non-IID MNIST dataset after 50 rounds, with varying parameters $M$ (number of random directions), $m$ (discretization points), $\lambda$ (regularization strength), and $\eta$ (learning rate), highlighting the critical roles these parameters play in optimizing performance. Higher $M$ values (e.g., 20 vs. 5) generally improve accuracy by enhancing the robustness of the Roughness Index estimation, as seen in the peak accuracy of 0.9371 for $M=20$, $m=29$, $\lambda=0.5$, $\eta=0.001$. Increasing $m$ (e.g., 29 vs. 9) refines the total variation approximation, boosting accuracy (0.9129 for $m=29$ vs. 0.8978 for $m=19$ at $M=10$, $\lambda=0.1$, $\eta=0.01$). Lower $\lambda$ (0.1 vs. 0.5) often yields better results by balancing regularization, while smaller $\eta$ (0.001 vs. 0.05) stabilizes training, with the highest accuracies occurring at $\eta=0.001$. These results underscore the importance of tuning $M$ and $m$ for precise Roughness Index computation and optimizing $\lambda$ and $\eta$ to mitigate non-IID data challenges in federated learning.

\subsection{CIFAR-10}
\label{subsec:cifar10}

\begin{figure}[!t]
\centering
\includegraphics[width=2.5in]{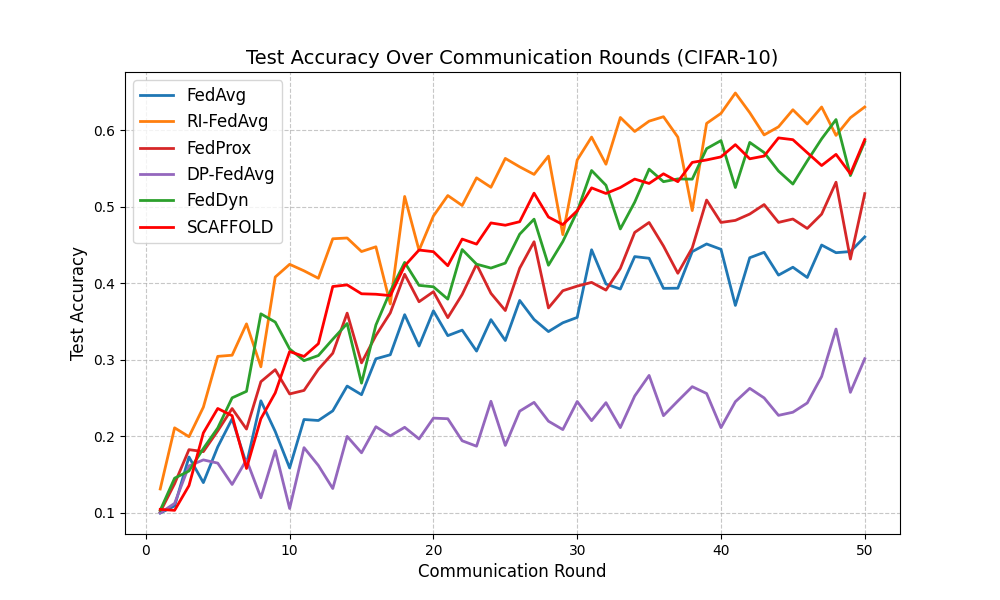}
\caption{Test Accuracy Evolution over 50 Communication Rounds on CIFAR-10.}
\label{fig:cifar10_accuracy}
\end{figure}

On the CIFAR-10 dataset, RI-FedAvg outperforms other federated learning algorithms with a final test accuracy of 0.6307, maintaining a strong lead after round 20, as shown in Figure~\ref{fig:cifar10_accuracy}. SCAFFOLD and FedDyn follow closely, achieving 0.5880 and 0.5854, respectively, with steady convergence. FedProx reaches 0.5174 but plateaus earlier, while FedAvg, with 0.4609, shows moderate performance and noticeable fluctuations. DP-FedAvg performs the worst at 0.3016, due to privacy constraints hindering convergence. Overall, RI-FedAvg, SCAFFOLD, and FedDyn demonstrate superior accuracy and stability compared to FedProx, FedAvg, and DP-FedAvg.

\begin{figure}[!t]
\centering
\includegraphics[width=2.5in]{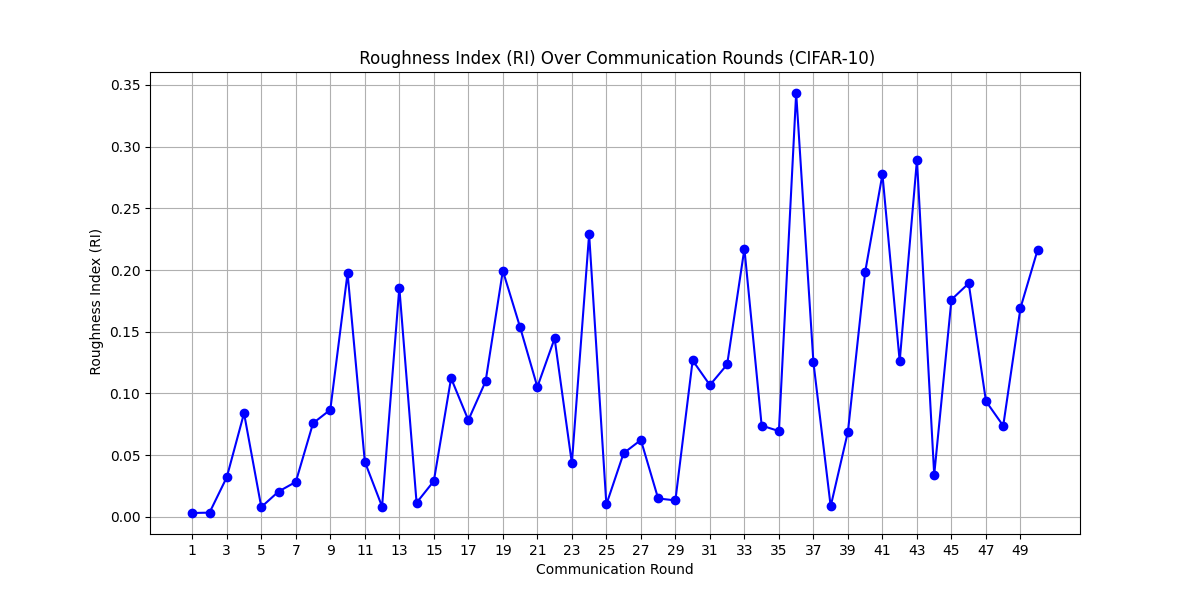}
\caption{Roughness Index Evolution over Communication Rounds on CIFAR-10.}
\label{fig:cifar10_ri}
\end{figure}

Figure~\ref{fig:cifar10_ri} depicts the progression of the Roughness Index ($\mathcal{I}_k$) across communication rounds for the non-IID CIFAR-10 dataset, revealing the dynamics of the loss landscape during RI-FedAvg training.

\begin{table}
\begin{center}
\caption{Estimated Rounds to Reach 50\% Test Accuracy on CIFAR-10 over 50 Rounds.}
\label{tab:cifar10_rounds}
\begin{tabular}{| l | c |}
\hline
Algorithm & Rounds to Reach 50\% \\
\hline
RI-FedAvg & 18 \\
\hline
FedProx & 38 \\
\hline
FedAvg & Never \\
\hline
DP-FedAvg & Never \\
\hline
FedDyn & 30 \\
\hline
SCAFFOLD & 26 \\
\hline
\end{tabular}
\end{center}
\end{table}

For the non-IID CIFAR-10 dataset with $\alpha=0.1$ and 100 clients, as shown in Table~\ref{tab:cifar10_rounds}, RI-FedAvg demonstrates the fastest convergence, requiring only 18 rounds to reach 50\% test accuracy, likely due to its robust handling of client heterogeneity. SCAFFOLD follows with 26 rounds, benefiting from its variance reduction technique to mitigate client drift. FedDyn requires 30 rounds, showing slower convergence, possibly due to its dynamic regularization approach balancing local and global objectives. FedProx, needing 38 rounds, performs worse, despite its regularization to address client drift, suggesting it is less effective in this highly heterogeneous setup. FedAvg and DP-FedAvg fail to reach 50\% accuracy within 50 rounds, with FedAvg struggling with non-IID data and DP-FedAvg hampered by noise from differential privacy constraints, highlighting the trade-off between privacy and performance.
\begin{table}
\begin{center}
\caption{Final Test Accuracy for Non-IID CIFAR-10 after 50 Rounds across Two Architectures.}
\label{tab:cifar10_accuracy_comparison_50}
\begin{tabular}{| l | c | c |}
\hline
Algorithm & Architecture 1 & Architecture 2 \\
\hline
RI-FedAvg & 0.6350 & 0.6000 \\
\hline
FedProx & 0.5236 & 0.4953 \\
\hline
FedAvg & 0.4700 & 0.4436 \\
\hline
DP-FedAvg & 0.3009 & 0.2765 \\
\hline
FedDyn & 0.5854 & 0.5422 \\
\hline
SCAFFOLD & 0.5853 & 0.5551 \\
\hline
\end{tabular}
\end{center}
\end{table}

Table~\ref{tab:cifar10_accuracy_comparison_50} presents the performance of RI-FedAvg, FedProx, FedAvg, DP-FedAvg, FedDyn, and SCAFFOLD on the non-IID CIFAR-10 dataset ($\alpha=0.1$, 100 clients) after 50 communication rounds, evaluated across two CNN architectures designed for CIFAR-10’s 10-class task. RI-FedAvg leads with the highest accuracies of 0.6350 on Architecture 1 and 0.6000 on Architecture 2, leveraging its Roughness Index regularization to effectively handle data heterogeneity. FedDyn follows with strong performance at 0.5854 and 0.5422, benefiting from its dynamic regularization approach. SCAFFOLD performs comparably to FedDyn, achieving 0.5853 and 0.5551, thanks to its variance reduction technique that corrects for client drift. FedProx achieves respectable accuracies of 0.5236 and 0.4953, its robust design mitigating client variations. FedAvg trails with 0.4700 and 0.4436, limited by its basic aggregation method. DP-FedAvg records the lowest accuracies at 0.3009 and 0.2765, reflecting the significant trade-off of its privacy-preserving mechanism. These results highlight RI-FedAvg’s superiority in optimizing federated learning across diverse model configurations, with SCAFFOLD and FedDyn as strong contenders.

\begin{figure}[!t]
\centering
\includegraphics[width=2.5in]{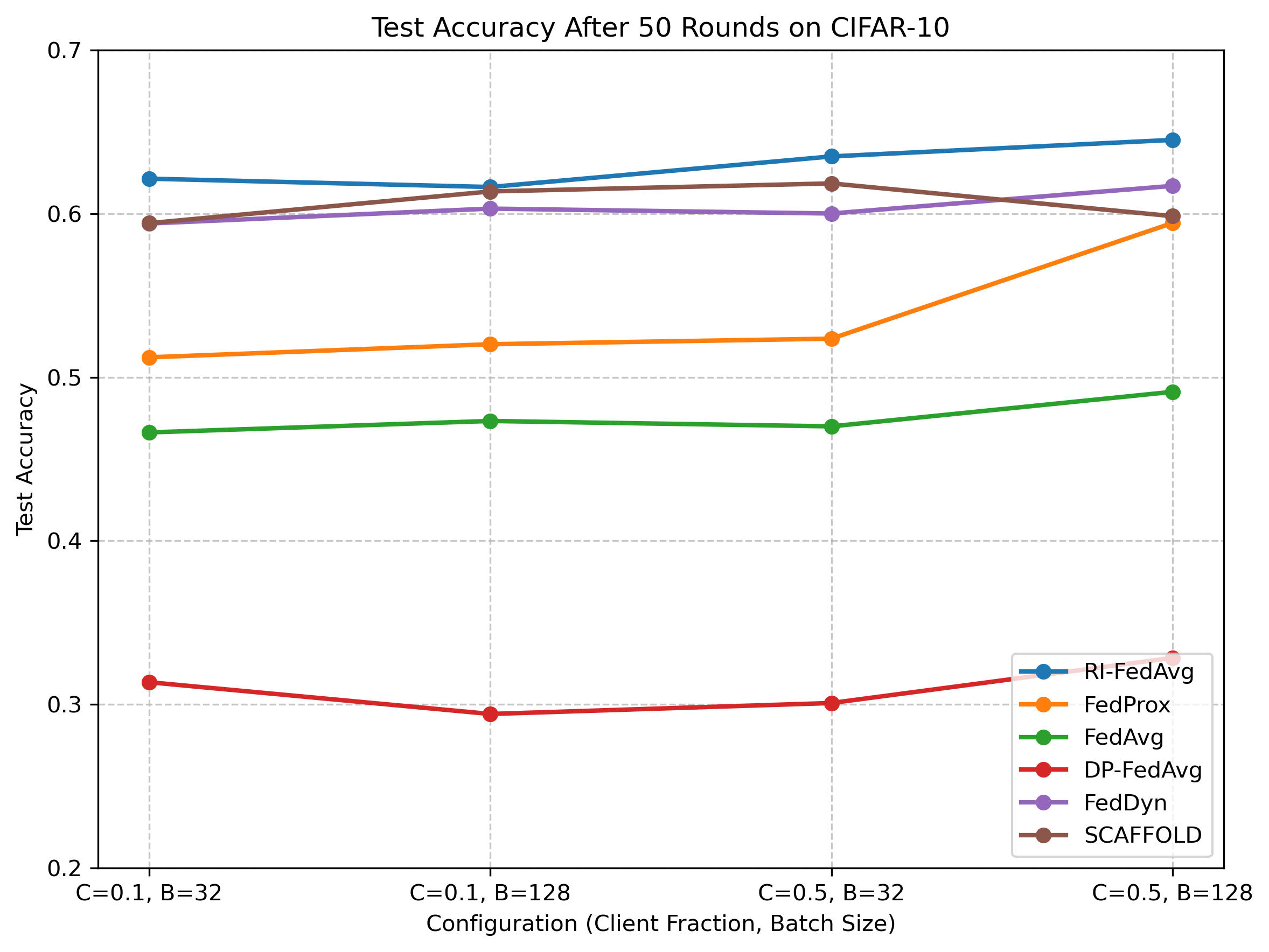}
\caption{Test Accuracy Trends on CIFAR-10 after 50 Rounds across Client Fractions and Batch Sizes.}
\label{fig:cifar10_line_chart_differentCB}
\end{figure}

On the CIFAR-10 dataset, several algorithms were evaluated for test accuracy after 50 rounds across different configurations (client fraction $C$ and batch size $B$), as shown in Figure~\ref{fig:cifar10_line_chart_differentCB}. The methods compared include RI-FedAvg, FedProx, FedAvg, DP-FedAvg, FedDyn, and SCAFFOLD. RI-FedAvg consistently outperforms others, achieving the highest accuracies (e.g., 0.6451 for $C=0.5$, $B=128$), benefiting from its robust initialization strategy. FedDyn and SCAFFOLD also perform strongly, with accuracies around 0.5940--0.6185, leveraging dynamic regularization and variance reduction, respectively. FedProx shows moderate performance (0.5122--0.5943), improving over FedAvg (0.4663--0.4911) by addressing client drift, but it lags behind more advanced methods. FedAvg, a baseline, struggles with heterogeneity, while DP-FedAvg performs the worst (0.2942--0.3285) due to the trade-off imposed by differential privacy. These results highlight the superiority of RI-FedAvg and the competitive performance of FedDyn and SCAFFOLD under varying configurations.

\subsection{CIFAR-100}
\label{subsec:cifar100}

\begin{figure}[!t]
\centering
\includegraphics[width=2.5in]{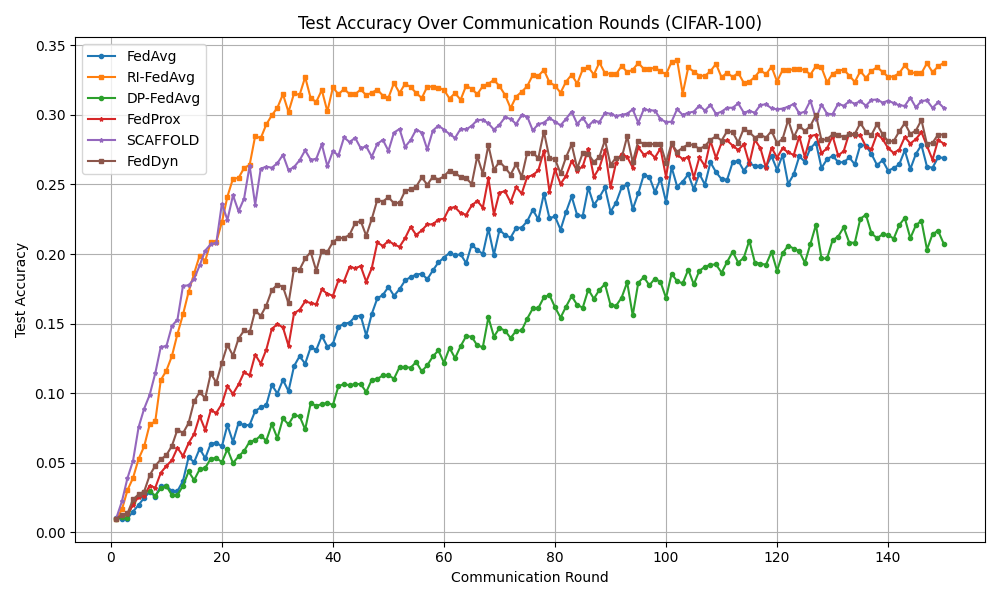}
\caption{Test Accuracy Evolution over 50 Rounds on CIFAR-100.}
\label{fig:cifar100_accuracy}
\end{figure}

On the CIFAR-100 dataset, federated learning methods FedAvg, SCAFFOLD, DP-FedAvg, FedProx, RI-FedAvg, and FedDyn demonstrate distinct accuracy trajectories over 50 rounds, as shown in Figure~\ref{fig:cifar100_accuracy}. FedAvg, a standard baseline, begins with an accuracy of 0.0101 and rises to 0.2691, reflecting stable but gradual convergence on CIFAR-100’s complex, 100-class structure. SCAFFOLD, aimed at mitigating client drift, starts at 0.0105 and reaches 0.3051, with faster early progress (e.g., 0.0989 at round 7 vs. FedAvg’s 0.0291) but slightly lower final accuracy. DP-FedAvg, incorporating differential privacy, shows competitive early accuracies (0.0100 to 0.0535 by round 19) and ends at 0.2075, balancing privacy and performance. FedProx, designed for non-IID data, starts at 0.0098 and climbs to 0.2794, closely tracking FedAvg but with slightly lower mid-training accuracies (e.g., 0.1463 at round 29 vs. FedAvg’s 0.1059). RI-FedAvg, using robust aggregation, achieves the highest accuracy of 0.3373, despite early challenges. FedDyn, with dynamic client updates, starts at 0.0100 and ends at 0.2858, showing moderate performance.

The CIFAR-100 results highlight trade-offs in these methods under complex, non-IID conditions. FedAvg’s simplicity ensures steady convergence but struggles with CIFAR-100’s heterogeneity. SCAFFOLD’s control variates accelerate early training but yield slightly lower final accuracy. DP-FedAvg’s privacy guarantees constrain convergence yet achieve a competitive final accuracy (0.2075), making it suitable for privacy-sensitive tasks. FedProx’s proximal term enhances robustness to non-IID data, but its performance mirrors FedAvg’s. RI-FedAvg’s robust aggregation excels, achieving the highest accuracy (0.3373), indicating strong adaptation to CIFAR-100’s complexity. FedDyn balances stability and robustness, with performance similar to FedProx but less volatility than SCAFFOLD.

\begin{figure}[!t]
\centering
\includegraphics[width=2.5in]{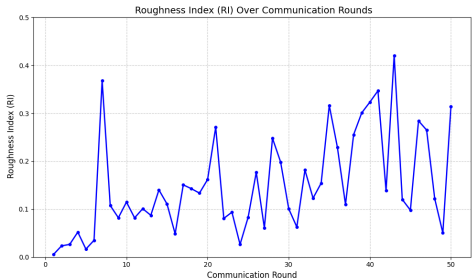}
\caption{Roughness Index Evolution over Communication Rounds for on CIFAR-100.}
\label{fig:cifar100_ri}
\end{figure}

Figure~\ref{fig:cifar100_ri} depicts the progression of the Roughness Index ($\mathcal{I}_k$) across communication rounds for the non-IID CIFAR-100 dataset, revealing the dynamics of the loss landscape during RI-FedAvg training.
\begin{table}
\begin{center}
\caption{Final Test Accuracy for Non-IID CIFAR-100 over 150 Rounds}
\label{tab:cifar100_accuracy}
\begin{tabular}{| l | c | c |}
\hline
Algorithm & 10 Clients & 100 Clients \\
\hline
FedProx & 0.2794 & 0.1306 \\
\hline
FedAvg & 0.2691 & 0.0561 \\
\hline
RI-FedAvg & 0.3373 & 0.2415 \\
\hline
DP-FedAvg & 0.2075 & 0.0463 \\
\hline
FedDyn & 0.2858 & 0.1617 \\
\hline
SCAFFOLD & 0.3051 & 0.2241 \\
\hline
\end{tabular}
\end{center}
\end{table}

For the non-IID CIFAR-100 dataset over 150 rounds, RI-FedAvg demonstrates superior performance, achieving the highest test accuracy of 0.3373 with 10 clients and 0.2415 with 100 clients, due to its robust handling of client heterogeneity, as shown in Table~\ref{tab:cifar100_accuracy}. SCAFFOLD follows closely, with accuracies of 0.3051 and 0.2241 for 10 and 100 clients, respectively, leveraging control variates to mitigate client drift effectively. FedDyn and FedProx yield moderate results, with accuracies of 0.2858 and 0.2794 for 10 clients, and 0.1617 and 0.1306 for 100 clients, indicating reasonable stability but less effectiveness in highly heterogeneous settings. FedAvg performs poorly, particularly with 100 clients (0.0561), suggesting vulnerability to non-IID data distributions. DP-FedAvg records the lowest accuracies (0.2075 and 0.0463), likely due to the privacy-accuracy trade-off imposed by differential privacy constraints.

\begin{table}
\begin{center}
\caption{Estimated Rounds to Reach 30\% Test Accuracy on CIFAR-100 over 150 Rounds.}
\label{tab:cifar100_rounds}
\begin{tabular}{| l | c |}
\hline
Algorithm & Rounds to Reach 30\% \\
\hline
RI-FedAvg & 31 \\
\hline
FedProx & Never \\
\hline
FedAvg & Never \\
\hline
DP-FedAvg & Never \\
\hline
FedDyn & 128 \\
\hline
SCAFFOLD & 72 \\
\hline
\end{tabular}
\end{center}
\end{table}
In the challenging landscape of federated learning on the non-IID CIFAR-100 dataset, where client heterogeneity and data complexity push algorithms to their limits, Table~\ref{tab:cifar100_rounds} reveals stark differences in performance for achieving 30\% test accuracy. RI-FedAvg stands out, swiftly reaching the target in just 31 rounds, a testament to its ability to navigate non-IID data with robust aggregation. SCAFFOLD follows, requiring 72 rounds, its variance reduction strategy proving effective but not as agile as RI-FedAvg in this setting. FedDyn, while eventually reaching the goal in 128 rounds, lags due to its dynamic regularization, which appears to grapple with CIFAR-100’s intricate class structure. FedProx, FedAvg, and DP-FedAvg fail to achieve 30\% accuracy within 150 rounds. The inability of FedProx to overcome client drift through regularization highlights its limitations in highly heterogeneous scenarios. FedAvg, lacking specialized mechanisms for non-IID data, predictably falters. DP-FedAvg’s failure underscores the severe cost of differential privacy, where added noise cripples performance, raising critical questions about balancing privacy with utility in complex datasets like CIFAR-100.

\begin{table}
\begin{center}
\caption{Final Test Accuracy on CIFAR-100 after 50 Rounds across Two Architectures.}
\label{tab:cifar100_accuracy_comparison_50}
\begin{tabular}{| l | c | c |}
\hline
Algorithm & Architecture 1 & Architecture 2 \\
\hline
RI-FedAvg & 0.2950 & 0.2200 \\
\hline
FedProx & 0.2480 & 0.2100 \\
\hline
FedAvg & 0.2100 & 0.1900 \\
\hline
DP-FedAvg & 0.1600 & 0.1400 \\
\hline
\end{tabular}
\end{center}
\end{table}
Now, we compare the algorithms on under two different architectures. Architecture 1: CNN with two convolutional layers (32, 64 filters), two fully connected layers (512, 100 units) and Architecture 2: CNN with one convolutional layer (16 filters), one fully connected layer (128, 100 units). Table~\ref{tab:cifar100_accuracy_comparison_50} compare the test accuracies of RI-FedAvg, FedProx, FedAvg, and DP-FedAvg on the non-IID CIFAR-100 dataset after 50 rounds under the defined architectures. RI-FedAvg outperforms others, achieving 0.2950 on Architecture 1 and 0.2200 on Architecture 2, leveraging Roughness Index regularization to effectively address data heterogeneity. FedProx follows with 0.2480 and 0.2100, its proximal term aiding performance but not matching RI-FedAvg’s robustness. FedAvg lags with 0.2100 and 0.1900, hindered by the lack of regularization for non-IID data. DP-FedAvg performs worst, with 0.1600 and 0.1400, as privacy constraints significantly impair accuracy on the complex CIFAR-100 task, highlighting RI-FedAvg’s advantage in both architectures.

\begin{figure}[!t]
\centering
\includegraphics[width=2.5in]{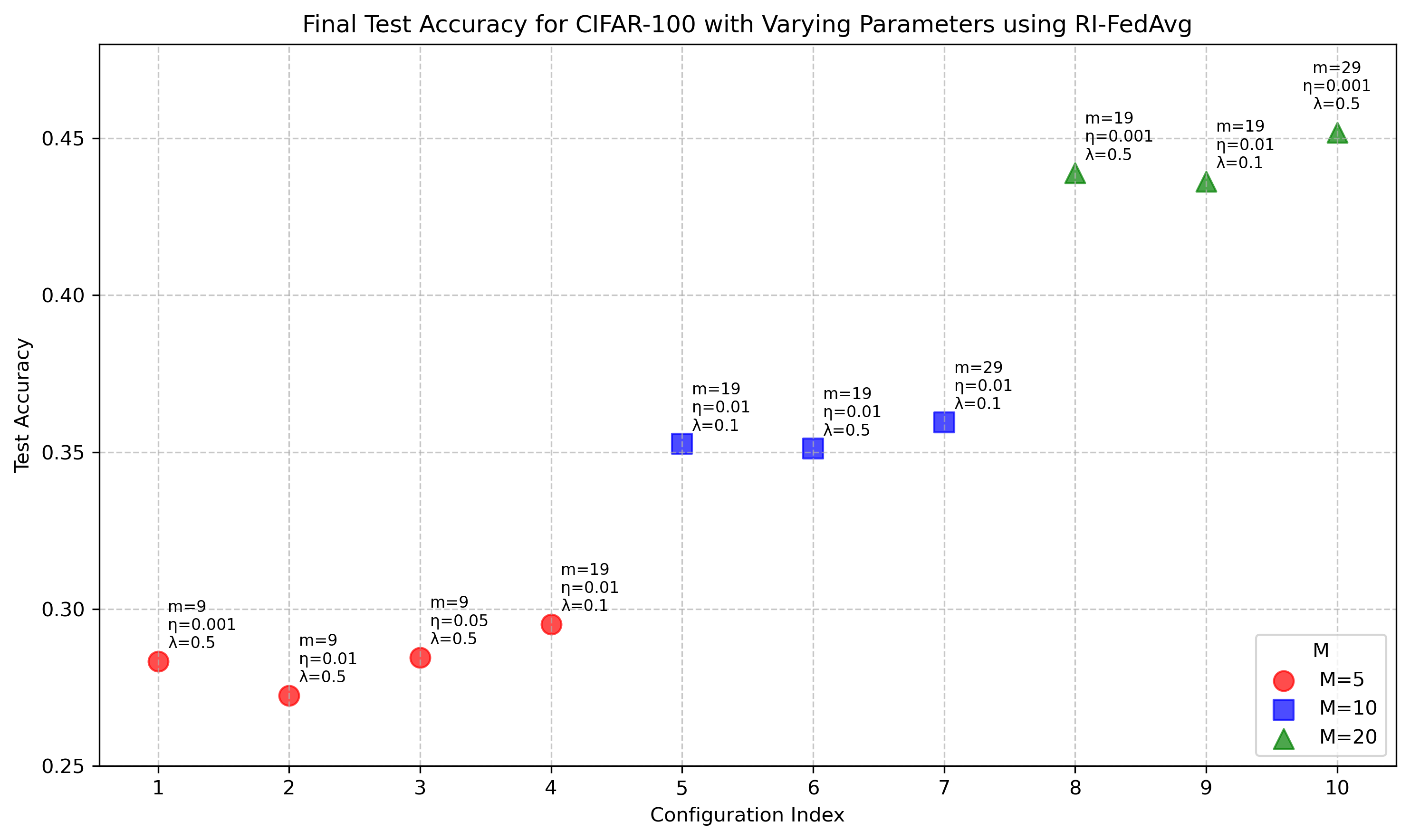}
\caption{Test Accuracy of RI-FedAvg on Non-IID CIFAR-100 after 50 Rounds across Varying Parameters ($M$, $m$, $\lambda$, $\eta$).}
\label{fig:cifar100_accuracy_scatter}
\end{figure}

Figure~\ref{fig:cifar100_accuracy_scatter} illustrates the test accuracies of the RI-FedAvg algorithm on the non-IID CIFAR-100 dataset over 50 rounds, with varying parameters $M$, $m$, $\lambda$, and $\eta$. The peak accuracy of 0.4519 is achieved with $M=20$, $m=29$, $\lambda=0.5$, and $\eta=0.001$, where a higher $M$ sharpens the Roughness Index and a larger $m$ fine-tunes variation estimates. A lower $\lambda=0.1$ or $\eta=0.001$ often stabilizes training, lifting accuracies like 0.3595 over 0.3526 when $m$ grows from 19 to 29. A higher $\eta=0.05$ yields lower accuracy (0.2846), revealing the delicate balance required to conquer CIFAR-100’s complexity in federated learning.

%---------------------------------------------------------------------------------
% 6. CONCLUSION
%---------------------------------------------------------------------------------
\section{Conclusion}
\label{conclusion}
In this work, we introduced RI-FedAvg, a novel federated learning algorithm designed to address client drift in non-IID settings by incorporating a Roughness Index (RI)-based regularization term. Our extensive experiments on MNIST, CIFAR-10, and CIFAR-100 demonstrate that RI-FedAvg consistently outperforms state-of-the-art baselines, including FedAvg, FedProx, FedDyn, SCAFFOLD, and DP-FedAvg, achieving higher test accuracy and faster convergence across diverse, heterogeneous data distributions. The rigorous convergence analysis for non-convex objectives further validates RI-FedAvg's theoretical robustness. These results underscore the effectiveness of RI-based regularization in enhancing model performance and stability in federated learning. 

%---------------------------------------------------------------------------------
% ACKNOWLEDGMENTS (OPTIONAL)
%---------------------------------------------------------------------------------
\section*{Declaration of AI Use}
During the preparation of this work, the authors used AI tools to check grammar and improve readability. 
%---------------------------------------------------------------------------------
% REFERENCES
%---------------------------------------------------------------------------------

\bibliographystyle{IEEEtran}
\bibliography{References}

%\vfill

%\newpage

\section*{Biography Section}

\vspace{11pt}

\begin{IEEEbiography}[{\includegraphics[width=1in,height=1.25in,clip,keepaspectratio]{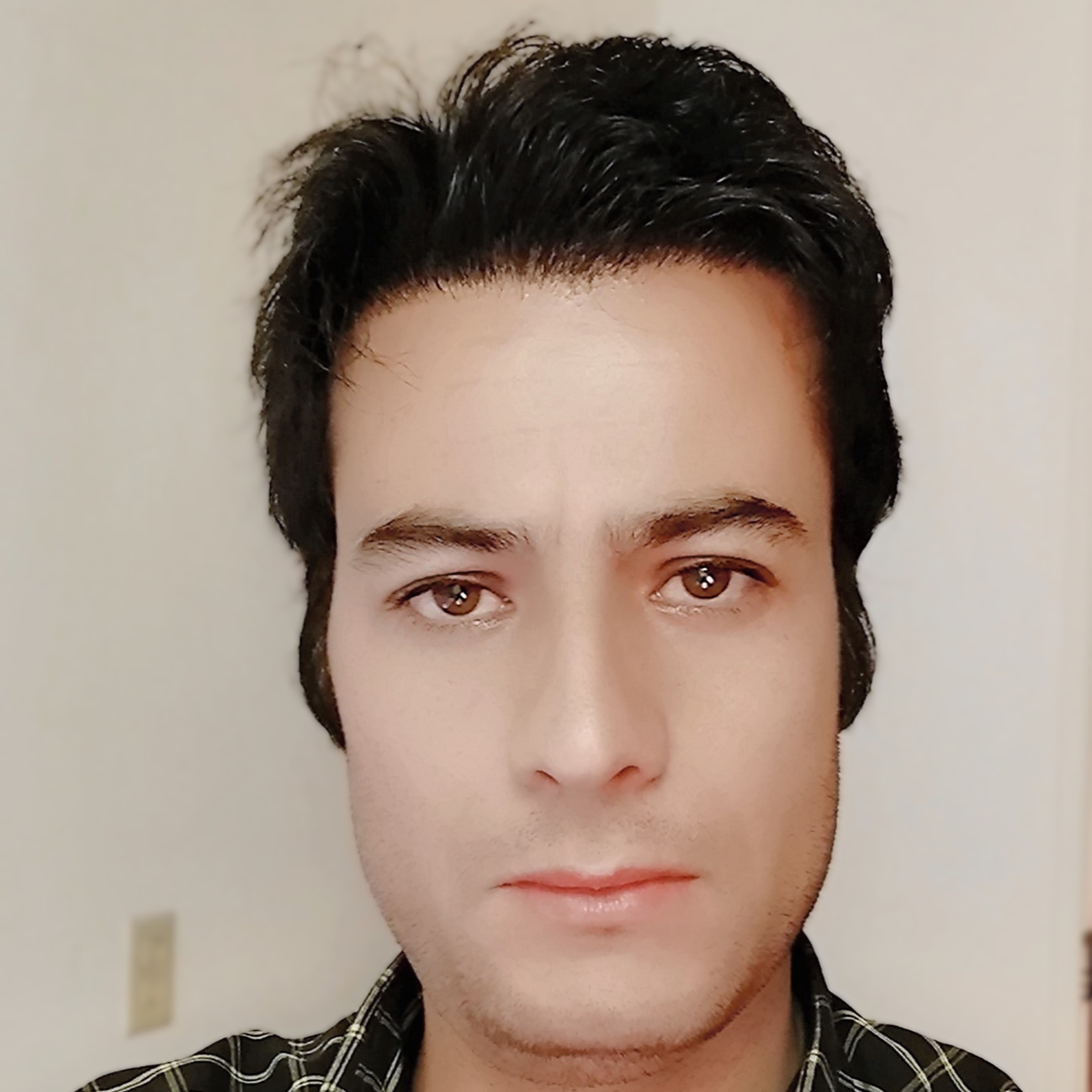}}]{Mohammad Partohaghighi}
received his M.S. degree in Applied Mathematics from Clarkson University in 2023. He then began his Ph.D. program in the Electrical Engineering Computer Sciences (EECS) at the University of California, Merced. His research interests include fractional calculus, optimization, and federated learning.
\end{IEEEbiography}

\vspace{11pt}

\begin{IEEEbiography}[{\includegraphics[width=1in,height=1.25in,clip,keepaspectratio]{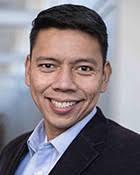}}]{Roummel F. Marcia}
received the Ph.D. degree in mathematics from the University of California, San Diego, in 2002. He is currently a Professor and Chair of Applied Mathematics with the School of Natural Sciences, University of California at Merced. He was a Computation and Informatics in Biology and Medicine Postdoctoral Fellow with the Department of Biochemistry, University of Wisconsin–Madison, and a Research Scientist with the Department of Electrical and Computer Engineering, Duke University. His current research interests include nonlinear optimization, numerical linear algebra, signal and image processing, computational biology, and machine learning.
\end{IEEEbiography}

\begin{IEEEbiography}[{\includegraphics[width=1in,height=1.25in,clip,keepaspectratio]{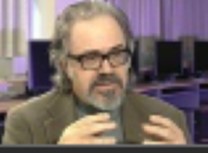}}]{Bruce J. West} 
Dr. Bruce J. West retired from being the Chief Scientist in Mathematical and Information Science at the US Army Research Office in July 2021, where he came from the position of full Professor of Physics at the University of North Texas from 1989 to 1999. He has a PhD in Physics from the University of Rochester 1970. Over nearly a 50 year career he has published 23 books. He has also published in excess of 300 scientific articles in referred scientific journals and magazines that have garnered a total of over 20K citations with an h-factor of 64. Before coming to ARO Dr. West was Professor of Physics, University of North Texas, 1989-1999; Chair of the Department of Physics 1989-1993. He was elected a Fellow of the American Physical Society 1992; he received the Decker Scholar Award 1993; the UNT President’s Award for research 1994; DoA Superior Civilian Service Award 2005; DoA Commander’s Award for Civilian Service 2010; Army Research and Development Achievement Award 2010; ARL Publication Award in 2003 and 2010; Army Samuel S. Wilks Memorial Award, 2011; the Presidential Meritorious Rank Award 2012; elected Fellow of the American Association for the Advancement of Science, 2012; the Presidential Distinguished Rank Award 2017.
\end{IEEEbiography}

\vspace{11pt}

\begin{IEEEbiography}[{\includegraphics[width=1in,height=1.25in,clip,keepaspectratio]{ 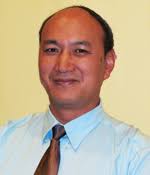}}]{YangQuan Chen}
received the Ph.D. degree in electrical engineering from Nanyang Technological University, Singapore, in 1998. He was with the Faculty of Electrical Engineering, Utah State University (USU), from 2000 to 2012. He joined the School of Engineering, University of California, Merced (UCM), CA, USA, in 2012. His research interests include mechatronics for sustainability, cognitive process control (smart control engineering enabled by digital twins), small multi-UAS based cooperative multi-spectral ``personal remote sensing" for precision agriculture and environment monitoring, and applied fractional calculus in complex modeling, control and signal processing. His current interests include complexity-informed machine learning. E-mail: \texttt{ychen53@ucmerced.edu}.
\end{IEEEbiography}

\vfill

\end{document}